\documentclass[review]{elsarticle}

\usepackage{lineno,hyperref}
\usepackage{amsmath}
\usepackage{amssymb}
\usepackage{amsthm}
\usepackage{algorithmic}
\usepackage{algorithm}
\usepackage{subfigure}
\modulolinenumbers[5]
\newtheorem{thm}{Theorem}

\newtheorem{cor}[thm]{Corollary}

%\journal{Applied Mathematics and Computation}
\journal{Journal of \LaTeX\ Templates}
%%%%%%%%%%%%%%%%%%%%%%%
%% Elsevier bibliography styles
%%%%%%%%%%%%%%%%%%%%%%%
%% To change the style, put a % in front of the second line of the current style and
%% remove the % from the second line of the style you would like to use.
%%%%%%%%%%%%%%%%%%%%%%%

%% Numbered
%\bibliographystyle{model1-num-names}

%% Numbered without titles
%\bibliographystyle{model1a-num-names}

%% Harvard
%\bibliographystyle{model2-names.bst}\biboptions{authoryear}

%% Vancouver numbered
%\usepackage{numcompress}\bibliographystyle{model3-num-names}

%% Vancouver name/year
%\usepackage{numcompress}\bibliographystyle{model4-names}\biboptions{authoryear}

%% APA style
%\bibliographystyle{model5-names}\biboptions{authoryear}

%% AMA style
%\usepackage{numcompress}\bibliographystyle{model6-num-names}

%% `Elsevier LaTeX' style
\bibliographystyle{elsarticle-num}
%%%%%%%%%%%%%%%%%%%%%%%

\begin{document}

\begin{frontmatter}

\title{Fast Kernel $k$-means Clustering Using Incomplete Cholesky Factorization}
%\tnotetext[mytitlenote]{Fully documented templates are available in the elsarticle package on \href{http://www.ctan.org/tex-archive/macros/latex/contrib/elsarticle}{CTAN}.}

%% Group authors per affiliation:
%\author{Elsevier\fnref{myfootnote}}
%\address{Radarweg 29, Amsterdam}
%\fntext[myfootnote]{Since 1880.}

%% or include affiliations in footnotes:
\author[mymainaddress,mysecondaryaddress]{Li Chen}
%\ead[url]{www.elsevier.com}

\author[mysecondaryaddress]{Shuisheng Zhou\corref{mycorrespondingauthor}}
\cortext[mycorrespondingauthor]{Corresponding author}
\ead{sszhou@mail.xidian.edu.cn}

\author[mysecondaryaddress]{Jiajun Ma}

\address[mymainaddress]{College of Physical Education (Main Campus), Zhengzhou University, 100 Science Avenue, Zhengzhou, China}
\address[mysecondaryaddress]{School of Mathematics and Statistics, Xidian University, 266 Xinglong Section, Xifeng Road, Xi'an, China}
%\address[mysecondaryaddress]{360 Park Avenue South, New York}

\begin{abstract}
  Kernel-based clustering algorithm can identify and capture the non-linear structure in datasets, and thereby it can achieve better performance than linear clustering. However, computing and storing the entire kernel matrix occupy so large memory that it is difficult for kernel-based clustering to deal with large-scale datasets. In this paper, we employ incomplete Cholesky factorization to accelerate kernel clustering and save memory space. The key idea of the proposed kernel $k$-means clustering using incomplete Cholesky factorization is that we approximate the entire kernel matrix by the product of a low-rank matrix and its transposition. Then linear $k$-means clustering is applied to columns of the transpose of the low-rank matrix. We show both analytically and empirically that the performance of the proposed algorithm is similar to that of the kernel $k$-means clustering algorithm, but our method can deal with large-scale datasets.
\end{abstract}

\begin{keyword}
kernel $k$-mean \sep kernel clustering\sep incomplete Cholesky factorization %\sep low-rank approximate

\end{keyword}

\end{frontmatter}

%\linenumbers

\section{Introduction}

Clustering analysis is a classical unsupervised learning method. The aim of clustering is to partition samples into several groups. One advantage of clustering is that it is suitable for processing multi-class datasets. It has been applied in various fields, including image segmentation \cite{liew2003adaptive}, anomaly detection \cite{chandola2009anomaly}, gene sequence analysis \cite{fu2012cd}, market research \cite{kim2008recommender}, etc.

 Clustering has been well studied in recent years, and various clustering algorithms have been proposed one after another \cite{corpet1988multiple, ester1996density, elhamifar2009sparse}. $k$-means clustering \cite{MacQueen1967kmeans} is one of the most popular clustering method since it is simple and efficient in dealing with linear-separable datasets. The target of the $k$-means algorithm is to minimize the Euclidean distance between samples and the clustering centers. The computational complexity of $k$-means is very low and it is suitable to deal with large-scale datasets. However, $k$-means clustering will not achieve satisfactory results when dataset is nonlinear-separable, that is, the dataset cannot be well partitioned into different clusters by hyperplane. To address this deficiency, Sch\"{o}lkopf et al. introduced kernel into $k$-means clustering, and proposed kernel $k$-means clustering \cite{Scholkopf1998kkmeans}. Kernel trick is an effective method to map nonlinear-separable dataset in low-dimensional space to linear-separable dataset in higher-dimensional feature space. By using nonlinear mapping $\phi$, the Euclidean distance between samples in $k$-means is replaced by the kernel distance defined by
 $$d(\phi(\mathrm {x}_i),\phi(\mathrm {x}_j))=k(\mathrm {x}_i,\mathrm {x}_i)-2k(\mathrm{x}_i,\mathrm{x}_j)+k(\mathrm {x}_j,\mathrm {x}_j),$$
where $\mathrm {x}_i\in \Re^d$ and $\mathrm {x}_j\in \Re^d$ are two samples, $k(\cdot,\cdot):\Re^d \times \Re^d\rightarrow \Re$ is the kernel function and $k(\mathrm x_i, \mathrm x_j)=\langle\phi(\mathrm x_i), \phi(\mathrm x_j)\rangle$. However, the full $n \times n$ kernel matrix $K$ is required for kernel $k$-means, because it needs to compute the kernel distance between samples and cluster centers which are a linear combination of all the samples in the feature space. If the number of samples $n$ is very large, computing and storing $K$ consume a lot of memory. Therefore, kernel $k$-means is unsuitable for clustering large-scale datasets.

In this paper, we address this challenge by using low-rank approximation version $\hat{K}$ instead of the full kernel matrix $K$. The low-rank approximation version is generated by incomplete Cholesky factorization (ICF) \cite{sszhou2016}. It iteratively chooses samples one by one into a basic subset $\mathbb{B}$ by minimizing the trace of error, i.e. $\mathrm{tr}(K-\hat{K})$, and finally constructs the rank-$s$ matrix $\hat{K}=PP^\top$, where $s<n$ is the number of elements in the $\mathbb{B}$, $P \in \Re ^{n\times s}$. Then, $k$-means clustering is applied on $P^\top$ to obtain the final cluster results. We show that approximation error of the kernel $k$-means clustering using ICF algorithm reduces exponentially as $s$ increases when the eigenvalues of $K$ decay exponentially.
%We show that $s=O(\frac{k}{\epsilon})$ can ensure a $1+\epsilon$ approximation guarantee relative to the same algorithm without using ICF.
Moreover, with regard to the Zhou's ICF \cite{sszhou2016}, we show that: (a) for symmetric positive semidefinite (SPSD) matrix $K$, ICF can obtain a rank-$s$ approximation of $K$ after $s$ iterations; (b) let $r$ as the rank of the SPSD matrix $K$, then after $r$ iterations, ICF can successfully sample $r$ linearly independent columns from $K$ without breakdown; (c) the approximation error, $\mathrm{tr}(K-\hat{K})$,  decreases exponentially with the increases of $s$ when the eigenvalues of $K$ decay
exponentially sufficiently fast; (d) the iterations of ICF should be set no more than the rank of $K$, i.e. $s\leq r$; (e) setting the iterations of ICF as $s=O(\log(n/\epsilon))$ can guarantee the error $\mathrm{tr}(K-\hat{K}) <\epsilon$, where $\epsilon$ is a fully small positive number. These claims can ensure the convergence of ICF, which was not discussed in the previous studies. The experimental results illustrate that the accuracy of the proposed algorithm is similar as the kernel $k$-means using entire kernel matrix, but our algorithm can greatly reduce the running time and can process large-scale datasets.

The rest of the paper is organized as follows. Section 2 describes the related work on large-scale kernel $k$-means clustering. Section 3 outlines the kernel $k$-means clustering and the ICF algorithm. Section 4 discusses the convergence of ICF. We present the kernel $k$-means clustering using ICF and its analysis in section 5. Section 6 summarizes the results of the experimental results, and section 7 concludes the study.

\section{Related work}

Kernel $k$-means clustering can achieve higher clustering quality than $k$-means. However, the computational complexity of the kernel $k$-means clustering is high, mainly because the computation and storage of the kernel matrix consume much time and memory. Many algorithms have been proposed in the literature to improve the ability of kernel $k$-means to deal with large-scale datasets (see Table \ref{table_complexity}). \cite{Chitta2011approximate} proposed the approximate kernel $k$-means algorithm. It approximates the cluster centers by the randomly selected samples instead of all the samples to avoid compute the full kernel matrix. The computational complex of this algorithm is $O(s^3+ns^2+Tnsk)$, where $n$ is the number of samples in the dataset, $s$ is the number of selected samples and $T$ is the number of iterations. \cite{Wang2019Nystrom} proposes a scalable kernel $k$-means clustering algorithm which uses Nystr\"{o}m approximation to sample $l$ features, and then the truncated singular values decomposition (SVD) is used to reduce the number of features to $s$. The computational complex of it is $O(ndl+nl^2+Tnsk)$. By employing random Fourier features (RFF) and SVD, \cite{Chitta2012efficient} proposes efficient kernel $k$-means clustering to improve the efficiency of kernel clustering. Its computational complex is $O(nds+ns^2+Tnk^2)$. These algorithms all focus on approximating the kernel matrix and more efficient than the standard kernel $k$-means algorithm whose computational complexity is $O(n^2d+Tn^2k)$.

\begin{table}
\caption{Complexity of kernel $k$-means clustering algorithms. $n$ and $d$ represent the number and dimensionality of data, respectively. $k$ represents the number of clusters. $s<n$ is the number of selected samples and $T$ is the number of iterations in (kernel) $k$-means. For scalable kernel $k$-means clustering, $l>s$ is the number of sampled features by using Nystr\"{o}m method.}
\label{table_complexity}
\centering
\begin{tabular}{|l|l|}%p{2.25cm}p{1.55cm}p{1.1cm}p{1.1cm}p{0.5cm}
\hline
Clustering algorithms&Complexity\\
\hline
$k$-means \cite{Jain2010kmeans}& $O(Tn^2k)$\\
\hline
Kernel $k$-means \cite{Girolami2002kkmeans}& $O(n^2d+Tn^2k)$\\
\hline
Approximate kernel $k$-means \cite{Chitta2011approximate}& $O(s^3+ns^2+Tnsk)$\\
\hline
Efficient kernel $k$-means \cite{Chitta2012efficient}&$O(nds+ns^2+Tnk^2)$\\
\hline
Scalable kernel $k$-means\cite{Wang2019Nystrom}& $O(ndl+nl^2+Tnsk)$\\
\hline
This paper& $O(ns^2+Tnsk)$\\
\hline
\end{tabular}
\end{table}

Incomplete Cholesky factorization is another effective kernel matrix approximate method. Comparing with data independent method RFF, the data dependent methods ICF has better generalization performance \cite{Yang2012nystromrff}. Comparing with Nystr\"{o}m methods, ICF has three advantages: (a) ICF does not need to compute and store the full kernel matrix, which is not true for some Nystr\"{o}m methods, such as leverage scores sampling \cite{Gittens2016Nystrom}, Farahat schemes \cite{Farahat2011Nystrom} and landmark point sampling with $k$-means \cite{Zhang2008Nystrom}; (b) the result of ICF is deterministic, in contrast, some random-based Nystr\"{o}m method randomly selects a subset of training samples as the basis functions and RFF randomly samples vectors from a distribution to form the basis functions; (c) ICF does not sample the same column of kernel matrix twice, whereas most non-deterministic Nystr\"{o}m methods require sampling with replacement \cite{Patel2016ICD}. ICF has been successfully applied in several kernel-based algorithms to improve computational efficiency or enhance sparsity, for example, Zhou \cite{sszhou2016}obtains the sparse solution of least square support vector machine (LSSVM) by using ICF, Chen et al. \cite{chen2018rlssvm} introduce ICF into robust LSSVM and enables it to classify and regress large-scale datasets with noise, Frederix et al. \cite{Frederix2013icf} propose a sparse spectral clustering method with ICF, etc. ICF with different pivot selection rules are presented, see \cite{Bach2002icf, Harbrecht2012icf, Patel2016ICD}. Recently, Zhou proposed an improved ICF with a new pivot selection rule \cite{sszhou2016}. One advantage of this method is that in each iteration, only the diagonal elements of error matrix are required, not the whole matrix. In the sequel, when we refer to ICF, we mean Zhou's method.

\section{Background}

In this section, we first describe the kernel $k$-means clustering, and then describe incomplete Cholesky factorization method.

%\subsection{Notation}

\subsection{Kernel $k$-means clustering}

Let $X = \{\mathrm x_1, \mathrm x_2, ..., \mathrm x_n\}$ be the input dataset consisting of $n$ samples, where $\mathrm x_i \in \Re^d$. The dataset can be participated into $k$ clusters: $\{C_1,\ldots, C_k\}$. $|C_i|$ be the number of samples in cluster $C_i$. The objective of kernel $k$-means clustering is to minimize the sum of kernel distances between each sample and the center of the cluster to which the sample belongs, that is, to minimize the following optimization problem:
\begin{equation}\label{eq:kkmeans}
 \mathop{\arg\min}\limits_{C_1,\ldots, C_k} \frac{1}{n}\sum\limits_{i=1}^k\sum\limits_{j\in C_i}\|\phi(\mathrm x_j)-\frac{1}{|C_i|}\sum\limits_{l\in C_i}\phi(\mathrm x_l)\|_2^2
\end{equation}
where $\phi(\cdot)$ is the mapping to project the samples to high-dimensional feature space. In fact, we usually do not need to know what the mapping $\phi(\cdot)$ is, because they often appear in the form of inner product $\langle \phi(\mathrm x_i), \phi(\mathrm x_j)\rangle$. We denote it as $k(\mathrm x_i, \mathrm x_j)$, i.e. $k(\mathrm x_i, \mathrm x_j)=\langle \phi(\mathrm x_i), \phi(\mathrm x_j)\rangle$, where $k(\cdot, \cdot)$ is called the kernel function. The factor $1/n$ is introduced only for the purpose of normalization.

Let $K \in \Re^{n\times n}$ be the kernel matrix with $K_{ij} = k(\mathrm x_i, \mathrm x_j)$, and $K=UDU^\top$ be the full eigenvalue decomposition of $K$. Denote $\mathrm k_1, \mathrm k_2,\ldots,\mathrm k_n \in \Re^n$ be the columns of the $D^{1/2}U^\top\in \Re^{n \times n}$, then the problem \eqref{eq:kkmeans} is equivalent to the following optimization problem \cite{Wang2019Nystrom}:
\begin{equation}\label{eq:kkmeans2}
 \mathop{\arg\min}\limits_{C_1,\ldots, C_k} \frac{1}{n}\sum\limits_{i=1}^k\sum\limits_{j\in C_i}\|\mathrm k_j-\frac{1}{|C_i|}\sum\limits_{l\in C_i}\mathrm k_l\|_2^2
\end{equation}

Problem \eqref{eq:kkmeans2} can be solved by $k$-mean clustering algorithm. However, the entire kernel matrix $K$ and its eigenvalue decomposition must be computed and stored in advance, which cost $O(n^2d)$ and $O(n^3)$ running time, respectively. Therefore, the total computational complexity of solving optimization problem \eqref{eq:kkmeans2} is $O(n^3 + n^2d + Tn^2k)$, where $T$ is the number of iterations of $k$-means algorithm. When dataset contains points greater than a few ten thousands, the computational cost of this method is very high. The goal of this paper is to reduce both the computational complexity and the memory requirements of kernel $k$-means clustering.
\subsection{Incomplete Cholesky factorization}

For a positive semi-definite matrix $K\in\Re^{n\times n}$, the incomplete Cholesky factorization constructs a low-rank approximate matrix $\hat{K}=K_{\mathbb{MB}}K_{\mathbb{B}\mathbb{B}}^{-1}K_{\mathbb{MB}}^\top=PP^\top$ for $K$, where $\mathbb{M}=\{1,2,\ldots,n\}$ is the column/row indices of $K$, $\mathbb{B}$ is a subset of $\mathbb{M}$, which contains $s$ indices of the selected columns, $K_{\mathbb{MB}}\in \Re^{n\times s}$ denotes a sub-matrix of $K$ composed by the selected columns whose indices are in $\mathbb{B}$, and $K_{\mathbb{B}\mathbb{B}}\in \Re^{s\times s}$ is a square sub-matrix of $K_{\mathbb{MB}}$ composed by the selected rows whose indices are in $\mathbb{B}$. The subset $\mathbb{B}$ is generated as follows. ICF first sets $\mathbb{B}$ as an empty set, and then iteratively joins the index corresponding to the largest diagonal entry of error matrix $E^i=K-\hat{K}^i=K-K_{\mathbb{MB}_i}K_{\mathbb{B}_i\mathbb{B}_i}^{-1}K_{\mathbb{MB}_i}^\top$ into $\mathbb{B}$ to minimize $\mathrm {tr}(E^i)$ until the termination condition is satisfied,
%until $\mathrm {tr}(E)$ is less than a given sufficiently small positive,
where $\mathbb{B}_i$ and $E^i$ are the $\mathbb{B}$ and the error matrix $E$ in the $i$-th iteration, respectively. Minimizing $\mathrm {tr}(E)$ also implies minimizing trace norm, and the upper bounds of $\|E\|_2$ and $\|E\|_F$, because $\|E\|_2\leq \|E\|_F \leq \|E\|_\ast =\mathrm {tr}(E)$ for positive semi-definite matrix $E$.
$P\in \Re^{n\times s}$ is updated by the following theorem \cite{sszhou2016}.

\begin{thm}\label{th:icf}
 Denote $\mathbb{N}_i=\mathbb{M}\setminus \mathbb{B}_i$. Let $t=\arg\max_{j\in \mathbb{N}_i}E^i_{jj}$ be the selected index into $\mathbb{B}_i$ and $K_{\mathbb{M}t}:=[k(\mathrm x_1,\mathrm x_t),k(\mathrm x_2,\mathrm x_t),\ldots,k(\mathrm x_n,\mathrm x_t)]^\top$. Set $\mathbb{B}_{i+1}=\mathbb{B}_i \cup \{t\}$ and $\mathbb{N}_{i+1}=\mathbb{M}\setminus \mathbb{B}_{i+1}$. If $\hat{K}^i=P^i{P^i}^\top$, then $\hat{K}^{i+1}=P^{i+1}{P^{i+1}}^\top$ with $P^{i+1}=[P^i,~\mathrm p]$, where $\mathrm p=\nu^\top(K_{\mathbb{M}t}-P^i \mathrm u)$, $\mathrm u^\top=P_{t,\cdot}^i$ being the $t$-th row of $P^i$ and $\nu=(K_{tt}-\mathrm u^\top \mathrm u)^{1/2}$. Furthermore, $E_{jj}^{i+1}=(K-\hat{K}^{i+1})_{jj}=E_{jj}^i-\mathrm p_j^2$, $j\in \mathbb{N}_{i+1}$.
\end{thm}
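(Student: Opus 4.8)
The plan is to verify directly that appending the column $\mathrm p$ to $P^i$ yields a factor whose Gram matrix is exactly $\hat K^{i+1}$; the engine of the argument is a Schur-complement expansion of the block inverse $K_{\mathbb{B}_{i+1}\mathbb{B}_{i+1}}^{-1}$. First I would record the trivial algebraic identity that motivates everything: writing $P^{i+1}=[P^i,\ \mathrm p]$ gives $P^{i+1}{P^{i+1}}^\top=P^i{P^i}^\top+\mathrm p\mathrm p^\top=\hat K^i+\mathrm p\mathrm p^\top$, so the whole theorem reduces to establishing the rank-one update identity $\hat K^{i+1}-\hat K^i=\mathrm p\mathrm p^\top$.

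Next I would partition everything according to $\mathbb{B}_{i+1}=\mathbb{B}_i\cup\{t\}$. Set $M=K_{\mathbb{M}\mathbb{B}_i}$, $A=K_{\mathbb{B}_i\mathbb{B}_i}$, and $b=K_{\mathbb{B}_i t}$, so that $K_{\mathbb{M}\mathbb{B}_{i+1}}=[M,\ K_{\mathbb{M}t}]$ and $K_{\mathbb{B}_{i+1}\mathbb{B}_{i+1}}$ has corner blocks $A$, $b$, and $K_{tt}$. Substituting the standard block-inverse (Schur-complement) formula for $K_{\mathbb{B}_{i+1}\mathbb{B}_{i+1}}^{-1}$, with Schur complement $\gamma=K_{tt}-b^\top A^{-1}b$, into $\hat K^{i+1}=K_{\mathbb{M}\mathbb{B}_{i+1}}K_{\mathbb{B}_{i+1}\mathbb{B}_{i+1}}^{-1}K_{\mathbb{M}\mathbb{B}_{i+1}}^\top$, and cancelling the leading term $MA^{-1}M^\top=\hat K^i$, the four surviving contributions assemble into a single perfect square:
\[
\hat K^{i+1}-\hat K^i=\gamma^{-1}\bigl(K_{\mathbb{M}t}-MA^{-1}b\bigr)\bigl(K_{\mathbb{M}t}-MA^{-1}b\bigr)^{\top}.
\]

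Then I would translate the block quantities into the intrinsic quantities of the statement. Using $P^i{P^i}^\top=\hat K^i$, the $t$-th column of $\hat K^i$ equals $P^i\mathrm u$ with $\mathrm u^\top=P^i_{t,\cdot}$; but that same column is $MA^{-1}b$, so $MA^{-1}b=P^i\mathrm u$ and the bracketed vector is precisely $K_{\mathbb{M}t}-P^i\mathrm u$. Likewise $\mathrm u^\top\mathrm u=(\hat K^i)_{tt}=b^\top A^{-1}b$, whence $\gamma=K_{tt}-\mathrm u^\top\mathrm u=\nu^2$. Substituting these gives $\hat K^{i+1}-\hat K^i=\mathrm p\mathrm p^\top$ with $\mathrm p=\nu^{-1}(K_{\mathbb{M}t}-P^i\mathrm u)$, which together with the first step proves $\hat K^{i+1}=P^{i+1}{P^{i+1}}^\top$. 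The diagonal recursion is then immediate: reading the $j$-th diagonal entry of $\hat K^{i+1}-\hat K^i=\mathrm p\mathrm p^\top$ yields $(\hat K^{i+1})_{jj}=(\hat K^i)_{jj}+\mathrm p_j^2$, i.e. $E^{i+1}_{jj}=E^i_{jj}-\mathrm p_j^2$ for $j\in\mathbb{N}_{i+1}$.

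The main obstacle is the middle algebraic step: carrying out the Schur-complement substitution and checking that the cross terms and rank-one correction collapse exactly into $\gamma^{-1}ww^\top$ with $w=K_{\mathbb{M}t}-MA^{-1}b$, with no residual terms. A secondary point worth flagging is the well-definedness of $\nu$: since $\gamma=K_{tt}-\mathrm u^\top\mathrm u=E^i_{tt}$ is exactly the diagonal error entry maximized by the choice of $t$, the Schur complement is strictly positive throughout the admissible iterations, so the square root and the division by $\nu$ are legitimate.
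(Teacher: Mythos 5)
Your argument is correct and complete, and it is worth noting at the outset that the paper offers no proof of Theorem~\ref{th:icf} at all: the statement is imported from the cited reference \cite{sszhou2016}, so your Schur-complement derivation fills a gap rather than duplicating anything internal. The route taken in the cited source, and echoed later in this paper (cf.\ the factor update \eqref{eq:L} in the proof of Theorem~\ref{th:line_independent}), is to update the Cholesky factor $L^{i+1}$ of $K_{\mathbb{B}_{i+1}\mathbb{B}_{i+1}}$ and obtain the new column of $P^{i+1}=K_{\mathbb{MB}_{i+1}}(L^{i+1})^{-\top}$ by triangular back-substitution; you instead invert $K_{\mathbb{B}_{i+1}\mathbb{B}_{i+1}}$ by the block-inverse formula and exhibit the increment $\hat K^{i+1}-\hat K^i=\gamma^{-1}ww^\top$, $w=K_{\mathbb{M}t}-MA^{-1}b$, directly. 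The two are algebraically the same pivot computation (your Schur complement $\gamma$ is exactly $\nu^2$ in \eqref{eq:L}), but your version makes the rank-one identity explicit, so the diagonal recursion $E^{i+1}_{jj}=E^i_{jj}-\mathrm p_j^2$ drops out in one line, whereas the Cholesky-factor version buys the triangular structure that the paper exploits later for the full-rank argument. Your identifications $MA^{-1}b=P^i\mathrm u$ and $b^\top A^{-1}b=\mathrm u^\top\mathrm u$ (both read off the $t$-th column and $(t,t)$ entry of $\hat K^i=P^i{P^i}^\top$) are exactly right, as is your well-definedness remark: $\gamma=E^i_{tt}$ is the maximized diagonal error, strictly positive until termination, which is what the paper's no-breakdown theorem later formalizes. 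One point you handled silently but correctly: the statement's $\mathrm p=\nu^\top(K_{\mathbb{M}t}-P^i\mathrm u)$ is a typo for $\nu^{-1}(K_{\mathbb{M}t}-P^i\mathrm u)$, since $\nu$ is a scalar and only the reading $\mathrm p=\nu^{-1}w$ yields $\mathrm p\mathrm p^\top=\gamma^{-1}ww^\top$ and the consistency checks $p_t=\nu$, $E^{i+1}_{tt}=0$, matching the paper's later use of $e^s_t=(P_\mathbb{B})_{ss}^2$ in Theorem~\ref{tm:convergence_2}; making that correction explicit would strengthen the write-up.
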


The ICF algorithm \cite{sszhou2016} is listed in Algorithm \ref{alg:icf}.

\begin{algorithm}[ht]
\renewcommand{\algorithmicrequire}{\textbf{Input:}}
\renewcommand\algorithmicensure {\textbf{Output:} }
%\linespread{1.1}\selectfont
\caption{\textbf{Incomplete Cholesky Factorization}}
\begin{algorithmic}[1]\label{alg:icf}
\REQUIRE Dataset $X\in\Re^{n\times d}$, kernel function $k(\cdot, \cdot)$, the iteration numbers $maxiter$
 and a sufficiently small positive number $\epsilon$.
\ENSURE $\mathbb{B}$ and matrix $P$.
\STATE $\mathbb{B}_0=\emptyset$, $\mathbb{N}_0=\mathbb{M}$, $\mathrm e^0=[k(\mathrm x_1,\mathrm x_1),\ldots,k(\mathrm x_n,\mathrm x_n)]^\top$, $\varepsilon_0=\|\mathrm e^0\|_1$. Set $i=0$;
\WHILE{$\varepsilon_i>\epsilon$ and $i\le maxiter$}
\STATE $t=\arg\max_{j\in\mathbb{N}_i}{e^i_j}$, $\mathbb{B}_{i+1}:=\mathbb{B}_{i}\cup\{t\}$, $\mathbb{N}_{i+1}:=\mathbb{N}_i\setminus\{t\}$; Calculate $K_{\mathbb{M} t}$;
\IF{$i=0$}
\STATE $P^{i+1}:= K_{\mathbb{M}t}/\sqrt{K_{tt}}$;
\ELSE
\STATE Calculate $P^{i+1}$ by Theorem \ref{th:icf};
\ENDIF
\STATE $e^{i+1}_j:=e^i_j-p_j^2$, $j\in \mathbb{N}_{i+1}$;\\ $\varepsilon_{i+1}:=\sum_{j\in \mathbb{N}_{i+1}}e^{i+1}_j$;\\$i:=i+1$;
\ENDWHILE
\RETURN $\mathbb{B}\leftarrow \mathbb{B}_i$ and $P\leftarrow P^i$.
\end{algorithmic}
\end{algorithm}

Set $P_{\mathbb{B}}$ as a sub-matrix of $P$ composed by rows corresponding to $\mathbb{B}$. After obtaining $P$ by Algorithm \ref{alg:icf}, $K_{\mathbb{MB}}=PP_{\mathbb{B}}^\top$ and $K_{\mathbb{BB}}=P_{\mathbb{B}}P_{\mathbb{B}}^\top$. In the Algorithm \ref{alg:icf}, only the diagonal entries of the error matrix $E^i$ are required, and the total cost is just $O(ns^2)$.

\section{The convergence of ICF}

In this section, we discuss the convergence of ICF, which is not discussed in previous literature. Firstly, we will prove that ICF produces a rank $s$ approximate matrix $\hat{K}$ after $s$ iterations. Secondly, we show that ICF can generate a rank $r$ matrix in $r$ iterations without breakdown. At last, the convergence of ICF will be verified.

\begin{thm}\label{th:line_independent}
 For symmetric positive semidefinite (SPSD) matrix $K$, ICF generates a rank $s$ approximate matrix for $K$ after $s$ iterations.
\end{thm}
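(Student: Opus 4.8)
The plan is to reduce the rank statement to a full-column-rank statement about the factor $P^{s}$ and then exhibit an explicit nonsingular $s\times s$ submatrix of it. Since $\hat K^{s}=P^{s}(P^{s})^{\top}$ with $P^{s}\in\Re^{n\times s}$, and $\mathrm{rank}(P^{s}(P^{s})^{\top})=\mathrm{rank}(P^{s})$, it suffices to prove that the $s$ columns of $P^{s}$ are linearly independent. First I would recast the update of Theorem~\ref{th:icf} in a compact form: since $\mathrm u$ is the $t$-th row of $P^{i}$ written as a column, $P^{i}\mathrm u=\bigl(P^{i}(P^{i})^{\top}\bigr)_{\cdot,t}=\hat K^{i}_{\cdot,t}$, the $t$-th column of $\hat K^{i}$; combining this with $K_{\mathbb M t}=K_{\cdot,t}$ gives $\mathrm p=(K_{\cdot,t}-\hat K^{i}_{\cdot,t})/\nu=E^{i}_{\cdot,t}/\nu$, a normalized column of the current error matrix. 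Writing $t_{1},\dots,t_{s}$ for the pivots chosen in iterations $1,\dots,s$, $\nu_{j}$ for the corresponding scalar, and $\mathrm p^{(j)}=E^{j-1}_{\cdot,t_{j}}/\nu_{j}$ for the column adjoined at iteration $j$, I would target the $s\times s$ submatrix $P_{\mathbb B}$ of $P^{s}$ whose rows are indexed by $\mathbb B=\{t_{1},\dots,t_{s}\}$ in selection order and whose columns are ordered by adjunction.

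The heart of the argument is to show that $P_{\mathbb B}$ is lower triangular with diagonal $\nu_{1},\dots,\nu_{s}$. Its $(m,j)$ entry is $\mathrm p^{(j)}_{t_{m}}=E^{j-1}_{t_{m}t_{j}}/\nu_{j}$; on the diagonal $m=j$ this equals $E^{j-1}_{t_{j}t_{j}}/\nu_{j}=\nu_{j}$, so everything comes down to proving $E^{j-1}_{t_{m}t_{j}}=0$ for every $m<j$. I would establish this from two invariants maintained along the iteration. First, each error matrix is SPSD: since $E^{i+1}=E^{i}-\mathrm p\,\mathrm p^{\top}=E^{i}-E^{i}_{\cdot,t}(E^{i}_{\cdot,t})^{\top}/E^{i}_{tt}$ is exactly the Schur complement of the pivot entry $E^{i}_{tt}$ in $E^{i}$, it is SPSD whenever $E^{i}$ is and $E^{i}_{tt}>0$, and with $E^{0}=K\succeq 0$ this propagates by induction. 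Second, once a pivot is selected its diagonal error vanishes and stays zero: the selection of $t_{m}$ yields $E^{m}_{t_{m}t_{m}}=E^{m-1}_{t_{m}t_{m}}-(\mathrm p^{(m)}_{t_{m}})^{2}=\nu_{m}^{2}-\nu_{m}^{2}=0$, and the diagonal updates $E^{i+1}_{jj}=E^{i}_{jj}-\mathrm p_{j}^{2}$ keep the diagonal nonnegative and non-increasing, so $E^{j-1}_{t_{m}t_{m}}=0$ for all $m\le j-1$.

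Combining the two invariants via the standard fact that an SPSD matrix with a zero diagonal entry has the entire corresponding row and column equal to zero — its $2\times2$ principal minor on $\{t_{m},t_{j}\}$ equals $-(E^{j-1}_{t_{m}t_{j}})^{2}\ge 0$ — gives $E^{j-1}_{t_{m}t_{j}}=0$, which is precisely the vanishing needed above the diagonal. It remains to check strict positivity of the diagonal $\nu_{1},\dots,\nu_{s}$, the one place where running for exactly $s$ iterations is used. At iteration $i$ the chosen pivot maximizes the diagonal error, so $\nu^{2}=E^{i}_{tt}=\max_{j\in\mathbb N_{i}}E^{i}_{jj}$; were this maximum zero, the SPSD matrix $E^{i}$ would have an all-zero diagonal and hence be the zero matrix, forcing $\hat K^{i}=K$ and an early termination with fewer than $s$ columns. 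Thus, as long as the algorithm actually performs $s$ iterations, every $\nu_{j}>0$, whence $\det P_{\mathbb B}=\prod_{j=1}^{s}\nu_{j}>0$.

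Therefore $P_{\mathbb B}$ is nonsingular, $P^{s}$ has full column rank $s$, and $\mathrm{rank}(\hat K^{s})=\mathrm{rank}(P^{s})=s$. I expect the main obstacle to be the inductive bookkeeping that keeps these invariants consistent — simultaneously maintaining SPSD-ness of $E^{i}$ (which itself requires the pivot entry to be positive) and the ``once zero, stays zero'' behaviour of the already-selected diagonal entries — since the triangularity of $P_{\mathbb B}$ and the final rank count are short consequences once these invariants are in place.
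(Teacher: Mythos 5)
Your proposal is correct, and it reaches the conclusion by a genuinely different route than the paper. The paper's proof is an induction on the pivot block: it writes the Cholesky factor of $K_{\mathbb{B}_{i+1}\mathbb{B}_{i+1}}$ recursively as $L^{i+1}=\bigl[\begin{smallmatrix}L^i&0\\ \mathrm u^\top&\nu\end{smallmatrix}\bigr]$, reads off $\nu^2=E^i_{tt}$ from the block identity, asserts this is nonzero when a pivot can still be selected, and concludes that $K_{\mathbb{B}_{i+1}\mathbb{B}_{i+1}}$ and $K_{\mathbb{MB}_{i+1}}$ have full rank, hence $\hat K^{i+1}=K_{\mathbb{MB}_{i+1}}K_{\mathbb{B}_{i+1}\mathbb{B}_{i+1}}^{-1}K_{\mathbb{MB}_{i+1}}^\top$ has rank $i+1$. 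You instead work directly with the factor $P$, exhibiting the $s\times s$ submatrix $P_{\mathbb{B}}$ (which, as the paper notes after Algorithm~\ref{alg:icf}, satisfies $K_{\mathbb{BB}}=P_{\mathbb{B}}P_{\mathbb{B}}^\top$ and so is the same triangular factor $L$ in disguise) and proving its lower-triangularity from two error-matrix invariants: SPSD-ness of $E^i$ preserved under the rank-one pivot update $E^{i+1}=E^i-E^i_{\cdot,t}(E^i_{\cdot,t})^\top/E^i_{tt}$, and the ``selected diagonal entries vanish and stay zero'' property, combined with the fact that an SPSD matrix with a zero diagonal entry has zero row and column there. Your version buys something the paper's does not: a self-contained proof that every pivot satisfies $\nu_j>0$ whenever the algorithm actually executes $s$ iterations (breakdown forces $E^i=0$, i.e.\ $\hat K^i=K$ exactly), a point the paper merely asserts via ``Suppose that ICF can select the $(i+1)$-th element \ldots then $E^i_{tt}\neq 0$'' and only settles separately in its next theorem; in exchange, the paper's block-Cholesky induction is shorter once the factorization identity is granted. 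One cosmetic imprecision in your write-up: $E^{i+1}$ restricted to the surviving indices is the Schur complement of $E^i_{tt}$, while on the full index set the update additionally zeroes out row and column $t$ — but the Cauchy--Schwarz argument you invoke gives SPSD-ness of the full updated matrix either way, so nothing breaks.
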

\begin{proof}
We use inductive method to prove this theorem. First, we set the number of iteration $i=1$, then $\hat{K}^1=\max\limits_{j\in \{1,\dots,n\}}\{K_{jj}\}\neq 0$ as $K$ is a SPSD matrix, therefore the rank of $\hat{K}^1$ is 1.

Next, we assume that the rank of $\hat{K}^i$ is $i$, and then prove that the rank of $\hat{K}^{i+1}$ is $i+1$.

Suppose that ICF can select the $(i+1)$-th element into $\mathbb{B}_i$, that is, there exists $t={\arg\max}_{j\in \mathbb{N}_i}E_{jj}^i$, then $\mathbb{B}_{i+1}=\mathbb{B}_i \cup \{t\}$, $\mathbb{N}_{i+1}=\mathbb{N}_i \setminus \{t\}$, and $E_{tt}^i=(K-\hat{K}^i)_{tt}=K_{tt}-K_{t\mathbb{B}_i}K_{\mathbb{B}_i\mathbb{B}_i}^{-1} K_{t\mathbb{B}_i}^\top \neq 0$.

Set $L^i{L^i}^\top$ be the Cholesky factorization of $K_{\mathbb{B}_i\mathbb{B}_i}$ and $L^{i+1}{L^{i+1}}^\top$ be the Cholesky factorization of $K_{\mathbb{B}_{i+1}\mathbb{B}_{i+1}}$, where
\begin{equation}\label{eq:L}
L^{i+1}=\left[\begin{array}{ccc}
L^i&0\\
\mathrm u^\top& \nu\end{array}\right].
\end{equation}
Because
\begin{equation*}
\begin{array}{lll}
K_{\mathbb{B}_{i+1}\mathbb{B}_{i+1}}&={L^{i+1}}{L^{i+1}}^\top\\
&=\left[\begin{array}{ccc}
L^i&0\\
\mathrm u^\top& \nu\end{array}\right]\left[\begin{array}{ccc}
{L^i}^\top&\mathrm u\\
0& \nu\end{array}\right]\\
&=\left[\begin{array}{ccc}L^i{L^i}^\top&L^i \mathrm u\\ \mathrm u^\top {L^i}^\top&\mathrm u^\top \mathrm u+\nu^2\end{array}\right]\\
&=\left[\begin{array}{ccc} K_{\mathbb{B}_{i}\mathbb{B}_{i}}&K_{t\mathbb{B}_{i}}^\top\\ K_{t\mathbb{B}_{i}}&K_{tt}\end{array}\right],
\end{array}
\end{equation*}
we obtain
$$\nu^2=K_{tt}-\mathrm u^\top \mathrm u=K_{tt}-K_{t\mathbb{B}_{i}} K_{\mathbb{B}_{i}\mathbb{B}_{i}}^{-1} K_{t\mathbb{B}_{i}}^\top=E_{tt}^i\neq 0.$$
Therefore, $L^{i+1}$ is a full rank matrix. The ranks of $K_{\mathbb{B}_{i+1}\mathbb{B}_{i+1}}$ and $K_{\mathbb{M}\mathbb{B}_{i+1}}$ are both $i+1$, hence the rank of $\hat{K}^{i+1}=K_{\mathbb{MB}_{i+1}}K_{\mathbb{B}_{i+1}\mathbb{B}_{i+1}}^{-1}K_{\mathbb{MB}_{i+1}}^\top$ is $i+1$. The theorem is proven.
\end{proof}

Theorem \ref{th:line_independent} indicates that the columns of $K_{\mathbb{MB}}$ are linear independent, that is, the selected $s$ columns from $K$ corresponding to the $\mathbb{B}$ are linear independent. Next, we will show that ICF cannot breakdown before selecting $s$ linear independent columns from $K$, where $s\leq r$.

\begin{thm}
Set the rank of SPSD matrix $K$ as $r$. ICF can sample $r$ linear independent columns from $K$ after $r$ iterations.
\end{thm}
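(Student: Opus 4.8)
The plan is to argue by induction on the iteration count and show that ICF cannot terminate prematurely (break down) as long as fewer than $r$ columns have been selected. Inspecting Algorithm \ref{alg:icf}, a breakdown can only occur when the chosen pivot $E_{tt}^i=\max_{j\in\mathbb{N}_i}E_{jj}^i$ vanishes, since it is this quantity that both drives the selection and feeds the update $\nu=(K_{tt}-\mathrm u^\top\mathrm u)^{1/2}$ of Theorem \ref{th:icf}. Hence the entire task reduces to proving that the largest diagonal entry of the error matrix $E^i=K-\hat{K}^i$ stays strictly positive for every $i<r$.

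First I would establish that $E^i$ is itself SPSD. The cleanest route is the feature-space view: writing $K=\Phi^\top\Phi$ with $\Phi=[\phi(\mathrm x_1),\ldots,\phi(\mathrm x_n)]$, one checks that $\hat{K}^i=K_{\mathbb{MB}_i}K_{\mathbb{B}_i\mathbb{B}_i}^{-1}K_{\mathbb{MB}_i}^\top=\Phi^\top\mathcal{P}_i\Phi$, where $\mathcal{P}_i$ is the orthogonal projector onto the span of the selected feature vectors $\{\phi(\mathrm x_t):t\in\mathbb{B}_i\}$. Consequently $E^i=\Phi^\top(I-\mathcal{P}_i)\Phi\succeq 0$, and its diagonal entry $E_{jj}^i=\|(I-\mathcal{P}_i)\phi(\mathrm x_j)\|_2^2$ is precisely the squared distance from $\phi(\mathrm x_j)$ to the current selected subspace. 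Equivalently, $E^i$ is the Schur complement of $K_{\mathbb{B}_i\mathbb{B}_i}$ in $K$, which is SPSD whenever $K$ is.

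Next I would combine this with Theorem \ref{th:line_independent}, which guarantees $\mathrm{rank}(\hat{K}^i)=i$ after $i$ successful iterations. For any $i<r$ we then have $\mathrm{rank}(\hat{K}^i)=i<r=\mathrm{rank}(K)$, so $\hat{K}^i\neq K$ and therefore $E^i\neq 0$. Since $E^i$ is SPSD, $\mathrm{tr}(E^i)=\sum_j E_{jj}^i$ is the sum of its nonnegative eigenvalues and is strictly positive whenever $E^i\neq 0$; as each diagonal entry is nonnegative, at least one of them must be positive, giving $\max_{j\in\mathbb{N}_i}E_{jj}^i>0$. Thus a valid pivot exists and ICF proceeds to select the $(i+1)$-th column. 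Running this inductively for $i=0,1,\ldots,r-1$ (with the trivial base case $E^0=K\neq 0$) shows that ICF samples $r$ columns in $r$ iterations without breakdown, and these are linearly independent by the remark following Theorem \ref{th:line_independent}.

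The main obstacle I anticipate is the SPSD-ness of $E^i$ together with the claim that the selected subspace saturates the feature space exactly at dimension $r$; once the projector identity $\hat{K}^i=\Phi^\top\mathcal{P}_i\Phi$ is in hand, both the positivity of some diagonal entry and the exhaustion of all directions after $r$ steps (where $\mathcal{P}_r$ acts as the identity on the $r$-dimensional column space of $\Phi$, forcing $E^r=0$) follow immediately. A secondary point worth care is confirming that the diagonal-only update $E_{jj}^{i+1}=E_{jj}^i-\mathrm p_j^2$ from Theorem \ref{th:icf} is consistent with this projection picture, so that the quantity the algorithm actually tracks coincides with the distance-to-subspace that the proof controls.
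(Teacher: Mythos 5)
Your proof is correct, but it follows a genuinely different route from the paper's. The paper argues by contradiction: it assumes every residual diagonal entry $E_{jj}^i$, $j\in\mathbb{N}_i$, vanishes at some iteration $i<r$, observes that appending any such $t$ makes $\nu=0$ in the Cholesky factor \eqref{eq:L} so that $K_{\mathbb{B}_{i+1}\mathbb{B}_{i+1}}$ is singular, and then passes through the eigendecomposition $K=UDU^\top$ (with $U\in\Re^{n\times r}$): singularity of $U_{\mathbb{B}_{i+1}}DU_{\mathbb{B}_{i+1}}^\top$ forces the row of $U$ indexed by $t$ into the span of the rows indexed by $\mathbb{B}_i$, and since $t\in\mathbb{N}_i$ was arbitrary this bounds $\mathrm{rank}(U)\le i<r$, a contradiction. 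You instead give a direct argument: establish that the Schur complement $E^i=\Phi^\top(I-\mathcal{P}_i)\Phi$ is SPSD, invoke Theorem \ref{th:line_independent} to get $\mathrm{rank}(\hat{K}^i)=i<r=\mathrm{rank}(K)$, hence $E^i\neq 0$, hence $\mathrm{tr}(E^i)>0$ and some diagonal entry is strictly positive. Both proofs lean on Theorem \ref{th:line_independent} in the same way, but your projector identity makes the geometry explicit ($E_{jj}^i$ is the squared distance from $\phi(\mathrm x_j)$ to the selected subspace) and buys positivity of a pivot without any contradiction argument, whereas the paper's row-span argument never needs SPSD-ness of the error matrix (which the paper uses elsewhere but never proves). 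One small step you elide: passing from ``some diagonal entry of $E^i$ is positive'' to ``$\max_{j\in\mathbb{N}_i}E_{jj}^i>0$'' requires noting that $E_{jj}^i=0$ for $j\in\mathbb{B}_i$; this is immediate in your own framework, since $\phi(\mathrm x_j)$ lies in the range of $\mathcal{P}_i$ for $j\in\mathbb{B}_i$, but it should be stated, as otherwise the positive entry could a priori sit at an already-selected index.
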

\begin{proof}
Theorem \ref{th:line_independent} has proven that ICF can generate a rank $r$ approximate matrix for $K$ after $r$ iterations. Next, we use reduction to absurdity to prove that ICF does not breakdown before $r$ iterations.

Assume for any $j\in \mathbb{N}_i$, $E_{jj}^i=0$ after $i$ iterations, $i<r$, then for $t\in \mathbb{N}_i$, $E_{tt}^i=0$. Set $\mathbb{B}_{i+1}=\mathbb{B}_i\cup \{t\}$, then $K_{\mathbb{B}_{i+1}\mathbb{B}_{i+1}}$ is not a full rank matrix as $\nu=0$ in \eqref{eq:L}.

Because $K$ is a rank $r$ SPSD matrix, $K$ has eigenvalue decomposition $K=U D U^\top$, where $D$ is a $r\times r$ diagonal matrix, $U$ is a $n\times r$ column orthogonal matrix. Denote $U_{\mathbb{B}_{i+1}}$ as a sub-matrix of $U$. It is comprised by the rows of $U$ with row indices corresponding to $\mathbb{B}_{i+1}$. %and the remaining rows in $U$ constitute matrix $U_{\mathbb{N}_{i+1}}$.
Then $K_{\mathbb{B}_{i+1}\mathbb{B}_{i+1}}$ has the decomposition $K_{\mathbb{B}_{i+1}\mathbb{B}_{i+1}}=U_{\mathbb{B}_{i+1}}D U_{\mathbb{B}_{i+1}}^\top$. $U_{\mathbb{B}_{i+1}}$ is not a full rank matrix, because $K_{\mathbb{B}_{i+1}\mathbb{B}_{i+1}}$ is not a full rank matrix. Therefore, the $i+1$ row of $U_{\mathbb{B}_{i+1}}$ can be represented linearly by the first $i$ rows. This conclusion is held for any $t$ in $\mathbb{N}_i$, hence every row in $U$ can be linearly represented by $U_{\mathbb{B}_i}$, and the rank of $U$ is at most $i<r$. This contradicts that $U$ is a rank $r$ matrix. Therefore, the assume is not invalid, and there exists at least one $j$ satisfying $E_{jj}^i\neq 0$ after $i$ iterations, where $i<r$ and $j\in\mathbb{N}_i$. In other words, ICF cannot breakdown before $r$ iterations. The theorem is proven.
\end{proof}

\begin{cor}
The number of iterations in ICF should be set no more than the rank of $K$.
\end{cor}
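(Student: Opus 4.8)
The plan is to show that after $r=\mathrm{rank}(K)$ iterations ICF reproduces $K$ exactly, so that any further iteration is both unnecessary and impossible. By Theorem~\ref{th:line_independent} together with the preceding no-breakdown theorem, after $r$ iterations the index set $\mathbb{B}$ contains $r$ indices whose corresponding columns of $K$ are linearly independent, and the approximation $\hat{K}^r=K_{\mathbb{MB}}K_{\mathbb{BB}}^{-1}K_{\mathbb{MB}}^\top$ has rank $r$. Since $r=\mathrm{rank}(K)$, these $r$ columns span the entire column space of $K$; my aim is to exploit this to conclude $\hat{K}^r=K$.

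First I would write the eigenvalue decomposition $K=UDU^\top$ with $U\in\Re^{n\times r}$ column-orthonormal and $D\in\Re^{r\times r}$ a positive diagonal matrix, exactly as in the proof of the previous theorem. Restricting rows to $\mathbb{B}$ gives $K_{\mathbb{MB}}=UD U_{\mathbb{B}}^\top$ and $K_{\mathbb{BB}}=U_{\mathbb{B}}D U_{\mathbb{B}}^\top$. The key observation is that $U_{\mathbb{B}}\in\Re^{r\times r}$ is invertible: because $K_{\mathbb{BB}}$ has rank $r$ and $D$ is invertible, $U_{\mathbb{B}}$ must be a full-rank square matrix. Substituting these factorizations into the Nyström formula and cancelling the factors $U_{\mathbb{B}}^{-1}$, $U_{\mathbb{B}}^{-\top}$ and $D^{-1}$ should collapse the expression to $\hat{K}^r=UD U^\top=K$.

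Having established $\hat{K}^r=K$, the error matrix satisfies $E^r=K-\hat{K}^r=0$, so every residual diagonal entry vanishes and $\varepsilon_r=\sum_{j\in\mathbb{N}_r}E^r_{jj}=0$. Then, for any $\epsilon>0$ the stopping test $\varepsilon_i>\epsilon$ in Algorithm~\ref{alg:icf} fails, so the loop terminates at iteration $r$. Moreover, were one to force an $(r+1)$-th iteration, the pivot would give $\nu=(E^r_{tt})^{1/2}=0$ in \eqref{eq:L}, causing a division by zero, i.e.\ a breakdown. Hence no useful work can be carried out beyond iteration $r$, which is precisely the assertion that the number of iterations should be set no larger than $\mathrm{rank}(K)$.

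I expect the main obstacle to be the exact-reconstruction step $\hat{K}^r=K$: one must justify the invertibility of $U_{\mathbb{B}}$ and manipulate the nested matrix inverses carefully, since the Nyström formula is exact only when the sampled columns span the range of $K$—exactly the situation guaranteed by the linear independence established in Theorem~\ref{th:line_independent}. The remaining steps, reading off $\varepsilon_r=0$ and the breakdown of a forced extra iteration, are then immediate from the update rules in Algorithm~\ref{alg:icf} and \eqref{eq:L}.
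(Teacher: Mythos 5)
Your proposal is correct, but it runs in the opposite direction from the paper's argument. The paper proves the corollary by contradiction using pure rank counting: it assumes a nonzero pivot $E^r_{tt}\neq 0$ survives after $r$ iterations, notes (via the Cholesky extension \eqref{eq:L} with $\nu\neq 0$, exactly as in Theorem~\ref{th:line_independent}) that $K_{\mathbb{MB}_{r+1}}$ and $K_{\mathbb{B}_{r+1}\mathbb{B}_{r+1}}$ would then have rank $r+1$, and observes this is impossible for submatrices built from columns of a rank-$r$ matrix $K$; hence all residual diagonal entries vanish at iteration $r$ and the algorithm cannot usefully continue. You instead prove the stronger, direct statement that $\hat{K}^r=K$ exactly: writing $K=UDU^\top$ with $U\in\Re^{n\times r}$, you use the rank-$r$ property of $K_{\mathbb{BB}}$ (guaranteed by Theorem~\ref{th:line_independent} and the no-breakdown theorem) to deduce that $U_{\mathbb{B}}$ is an invertible $r\times r$ matrix, and the Nystr\"{o}m formula then collapses to $UDU_{\mathbb{B}}^\top U_{\mathbb{B}}^{-\top}D^{-1}U_{\mathbb{B}}^{-1}U_{\mathbb{B}}DU^\top=UDU^\top=K$; from $E^r=0$ you conclude both that the stopping test terminates the loop and that a forced extra step would give $\nu=0$ in \eqref{eq:L}, a genuine breakdown. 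Your algebra checks out, including the invertibility of $U_{\mathbb{B}}$, which follows from $\mathrm{rank}(U_{\mathbb{B}}DU_{\mathbb{B}}^\top)=r$ with $D$ invertible. What each approach buys: the paper's contradiction is shorter and recycles the machinery of Theorem~\ref{th:line_independent} with no new lemmas, whereas your exact-reconstruction route is more informative---it supplies a proof of the claim, asserted after Theorem~\ref{tm:convergence} but never verified there, that $\mathrm{tr}(K-\hat{K})=0$ when $s=r$---at the modest cost of the additional invertibility step and the Nystr\"{o}m cancellation.
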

\begin{proof}
Assume there exists $t$ satisfying $E_{tt}^r=\max_{j\in \mathbb{N}_r}E^r_{jj}\neq 0$, then $\mathbb{B}_{r+1}=\mathbb{B}_{r}\cup \{t\}$, and the ranks of $K_{\mathbb{MB}_{r+1}}$ and $K_{\mathbb{B}_{r+1}\mathbb{B}_{r+1}}$ are both $r+1$. Therefore, the rank of $\hat{K}=K_{\mathbb{MB}_{r+1}}K_{\mathbb{B}_{r+1}\mathbb{B}_{r+1}}^{-1}K_{\mathbb{MB}_{r+1}}^\top$ is $r+1$, which contradicts that $K$ is a rank $r$ matrix. So the assumption is invalid. The number of iterations should be set no more than $r$.
\end{proof}

\begin{thm}\label{tm:convergence}
The approximation error $\mathrm {tr}(K-\hat{K})=\mathrm {tr}(K-PP^\top)$ decreases monotonously as the number of iterations increases.
\end{thm}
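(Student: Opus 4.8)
The plan is to reduce everything to the diagonal update rule already supplied by Theorem~\ref{th:icf}, namely $E_{jj}^{i+1}=E_{jj}^i-\mathrm p_j^2$ for $j\in\mathbb{N}_{i+1}$, and to track how the trace changes across a single iteration. First I would record the elementary but essential fact that the error matrix vanishes on the already-selected block: since $\hat{K}^i$ reproduces $K$ exactly there (indeed $\hat{K}^i_{\mathbb{B}_i\mathbb{B}_i}=K_{\mathbb{B}_i\mathbb{B}_i}K_{\mathbb{B}_i\mathbb{B}_i}^{-1}K_{\mathbb{B}_i\mathbb{B}_i}=K_{\mathbb{B}_i\mathbb{B}_i}$), we have $E_{jj}^i=0$ for every $j\in\mathbb{B}_i$. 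Consequently the trace collapses to a sum over the unselected indices, $\mathrm{tr}(E^i)=\sum_{j\in\mathbb{N}_i}E_{jj}^i=\varepsilon_i$, which is exactly the quantity maintained by Algorithm~\ref{alg:icf}.

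Next I would compute the one-step decrement. Writing $\mathbb{N}_i=\mathbb{N}_{i+1}\cup\{t\}$ and applying the update rule term by term,
\begin{equation*}
\mathrm{tr}(E^i)-\mathrm{tr}(E^{i+1})=\Big(E_{tt}^i+\sum_{j\in\mathbb{N}_{i+1}}E_{jj}^i\Big)-\sum_{j\in\mathbb{N}_{i+1}}\big(E_{jj}^i-\mathrm p_j^2\big)=E_{tt}^i+\sum_{j\in\mathbb{N}_{i+1}}\mathrm p_j^2.
\end{equation*}
It then remains only to argue that this difference is nonnegative. Each $\mathrm p_j^2\ge 0$ trivially, so the entire burden falls on showing $E_{tt}^i\ge 0$. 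For this I would invoke positive semidefiniteness: $E^i=K-\hat{K}^i$ is the (generalized) Schur complement of the SPSD matrix $K$ with respect to its $\mathbb{B}_i$-block, hence is itself SPSD, so all of its diagonal entries---in particular the pivot $E_{tt}^i=\max_{j\in\mathbb{N}_i}E_{jj}^i$---are nonnegative. This yields $\mathrm{tr}(E^{i+1})\le\mathrm{tr}(E^i)$, and iterating over $i$ gives monotonicity of the whole sequence.

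The only genuinely load-bearing step is the SPSD-ness of the successive error matrices, which guarantees $E_{tt}^i\ge 0$; everything else is bookkeeping built on Theorem~\ref{th:icf}. I would therefore state the Schur-complement argument cleanly, or, equivalently, simply reuse the identity $\nu^2=K_{tt}-\mathrm u^\top\mathrm u=E_{tt}^i\ge 0$ already derived in the proof of Theorem~\ref{th:line_independent}, which sidesteps any separate PSD discussion. I would also remark that the decrement $E_{tt}^i+\sum_{j\in\mathbb{N}_{i+1}}\mathrm p_j^2$ is in fact strictly positive whenever the pivot satisfies $E_{tt}^i>0$, which by the preceding non-breakdown result holds at every iteration before the rank $r$ of $K$ is reached; hence the decrease is strict up to that point and the sequence $\{\mathrm{tr}(E^i)\}$ stabilizes at $0$ exactly when ICF has exhausted the column space of $K$.
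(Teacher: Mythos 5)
Your proof is correct, and it runs on the same engine as the paper's: the diagonal update rule $E_{jj}^{i+1}=E_{jj}^i-\mathrm p_j^2$ from Theorem \ref{th:icf}. The difference lies in the bookkeeping, and it is a difference worth noting. The paper telescopes the recursion over all $s$ iterations, writes $\mathrm{tr}(K-PP^\top)=\sum_{j\in\mathbb{N}_s}\bigl[K_{jj}-\sum_{i=1}^s p_{j_i}^2\bigr]$, and argues that each surviving entry decreases while $\mathbb{N}_s$ shrinks; but it never explicitly checks that the pivot entry $E_{tt}^i$ removed from the sum at each step is nonnegative, without which ``fewer summands'' alone would not force the sum to decrease. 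Your one-step decrement identity $\mathrm{tr}(E^i)-\mathrm{tr}(E^{i+1})=E_{tt}^i+\sum_{j\in\mathbb{N}_{i+1}}\mathrm p_j^2$ isolates exactly that requirement and discharges it by the SPSD-ness of the Schur-complement error matrix (equivalently, by reusing $\nu^2=E_{tt}^i\ge 0$ from the proof of Theorem \ref{th:line_independent}), which is precisely the fact the paper's argument uses implicitly. Your preliminary observation that $E_{jj}^i=0$ for $j\in\mathbb{B}_i$ is also correct, since the Nystr\"{o}m-type approximation $\hat{K}^i$ is exact on the selected block, and it is what lets you identify $\mathrm{tr}(E^i)$ with the quantity $\varepsilon_i$ maintained by Algorithm \ref{alg:icf}. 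What your route buys is a fully rigorous and quantitative statement, including the strict decrease while $i<r$ guaranteed by the non-breakdown theorem; what the paper's route buys is brevity, at the price of leaving the nonnegativity of the dropped pivot term unstated.
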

\begin{proof}
From Theorem \ref{th:icf}, we obtain $E_{jj}^{i+1}=E_{jj}^i- p_j^2$, $j\in \mathbb{N}_{i+1}$. Therefore,
$$E_{jj}^{s}=E_{jj}^0-  p_{j_1}^2-  p_{j_2}^2-\ldots- p_{j_s}^2,$$
where $j\in \mathbb{N}_s$ and $j_i\in \mathbb{N}_i$ for $i=1,\ldots,s$. The total error $\mathrm{tr} (K-PP^\top)=\sum\limits_{j\in \mathbb{N}_s}E_{jj}^s=\sum\limits_{j\in \mathbb{N}_s}\left[K_{jj}-\sum\limits_{i=1}^s p_{j_i}^2\right]$.

When $s$ increases, $\sum\limits_{i=1}^s p_{j_i}^2$ increases. Hence $K_{jj}-\sum\limits_{i=1}^s p_{j_i}^2$ decreases. Moreover, the number of elements in $\mathbb{N}$ declines as $s$ increases. Therefore, the total approximation error decreases monotonously with the increasing of $s$.
\end{proof}

Theorem \ref{tm:convergence} only shows that the approximation error $\mathrm {tr}(K-\hat{K})$ decreases as $s$ increases, and when $s=r$, then $\mathrm {tr}(K-\hat{K})=0$. However, it does not give the decline rate. The following theorem indicates that the error decreases exponentially when the eigenvalues of $K$ decay exponentially sufficiently fast.

\begin{thm}\label{tm:convergence_2}
Denote $\hat{K}$ be the rank-$s$ approximation of $K$, $\lambda_s (K)$ be the $s$-th largest eigenvalue of matrix $K\in\Re^{n\times n}$. Assume
$$\lambda_s (K) \leq C4^{-s}\exp(-bs),$$
for some $C,~b > 0$ uniformly in $n$. Then, $s=O(\log(n/\epsilon))$ satisfies $\mathrm{tr}(K-\hat{K})< \epsilon$, where $\epsilon>0$ is a constant.
\end{thm}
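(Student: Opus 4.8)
The plan is to turn the monotone-decrease statement of Theorem \ref{tm:convergence} into a \emph{quantitative} decay estimate, by bounding the trace error $\mathrm{tr}(K-\hat{K}^s)$ by a quantity that shrinks exponentially in $s$ once the eigenvalue hypothesis is inserted, and then inverting that estimate to read off how large $s$ must be to push the error below $\epsilon$. The first reduction is to use the identity established inside the proof of Theorem \ref{tm:convergence},
$$\mathrm{tr}(K-\hat{K}^s)=\sum_{j\in\mathbb{N}_s}E^s_{jj}\le (n-s)\max_{j\in\mathbb{N}_s}E^s_{jj},$$
which replaces the sum of residual diagonals by $(n-s)$ times the largest one. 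Since at every step ICF picks the maximal diagonal of the current Schur complement $E^i$, and $E^s$ is itself SPSD, the remaining task is purely to control $\max_{j}E^s_{jj}$, exactly the object studied in the pivoted Cholesky literature.

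The key lemma, and what I expect to be the main obstacle, is a residual bound essentially of the form
$$\max_{j\in\mathbb{N}_s}E^s_{jj}\le 4^{s}\,\lambda_{s+1}(K).$$
I would prove it by induction on $s$, tracking how one pivoting step transforms the Schur complement and comparing the largest residual diagonal of $E^{s}$ to the $(s+1)$-th eigenvalue of $K$ through a Schur-complement/interlacing argument; the factor $4$ per step is the worst-case amplification of the maximal diagonal under a single elimination. Because the ICF of this paper is precisely the pivoted Cholesky decomposition with maximal-diagonal pivoting, the analysis of \cite{Harbrecht2012icf} transfers here, and it is the verification of the sharp per-step constant (a factor of $4$ rather than something larger) that I anticipate being delicate. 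This is also the step that explains the otherwise mysterious $4^{-s}$ in the hypothesis: it is chosen exactly to annihilate this $4^{s}$ amplification.

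Finally I would substitute the hypothesis $\lambda_{s+1}(K)\le C\,4^{-(s+1)}e^{-b(s+1)}$ into the two displays above. The $4^{s}$ and $4^{-(s+1)}$ cancel, leaving
$$\mathrm{tr}(K-\hat{K}^s)\le (n-s)\,\frac{C}{4}\,e^{-b(s+1)}\le C' n\, e^{-bs},$$
with $C'=(C/4)e^{-b}$ independent of $n$ by the ``uniformly in $n$'' assumption. Demanding the right-hand side be below $\epsilon$ gives $e^{-bs}<\epsilon/(C'n)$, i.e. $s>b^{-1}\log(C'n/\epsilon)$, which is $s=O(\log(n/\epsilon))$ as claimed. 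The $n$ inside the logarithm is inherited directly from the factor $(n-s)$ produced when summing the $n-s$ residual diagonal entries in the first reduction, so no sharper bookkeeping can remove it.
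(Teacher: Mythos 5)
Your proposal follows essentially the same route as the paper's proof: the same first reduction $\mathrm{tr}(K-\hat K)\le (n-s)\max_{j\in\mathbb{N}_s}E^s_{jj}$, the same key $4^s$ growth bound for maximal-pivot Cholesky taken from \cite{Harbrecht2012icf} (which the paper invokes via $\|P_\mathbb{B}^{-1}\|_2^2\le \frac{4^s+6s-1}{9(P_\mathbb{B})_{ss}^2}$, the identity $\max_j e^s_j=(P_\mathbb{B})_{ss}^2$, and Cauchy interlacing $\lambda_s(K_{\mathbb{BB}})\le\lambda_s(K)$), and the same final substitution yielding $\mathrm{tr}(K-\hat K)\le nC\exp(-bs)$ and hence $s=O(\log(n/\epsilon))$. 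The only cosmetic difference is your index $\lambda_{s+1}(K)$ versus the paper's $\lambda_s(K)$ in the key lemma, which is immaterial since the decay hypothesis is assumed for every $s$.
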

\begin{proof}
%Without loss of generality we assume that $K$ is permuted such that
%$$K=\left[\begin{array}{ccc}
%K_\mathbb{BB}&K_\mathbb{BN}\\
%K_\mathbb{NB}&K_\mathbb{NN}
%\end{array}\right].$$
%Accordingly, matrix $P$ is partitioned as
%$$P=\left[\begin{array}{ccc}
%P_\mathbb{B}\\
%P_\mathbb{N}
%\end{array}\right].$$
%where $P_\mathbb{B}\in\Re^{s\times s}$ is a low triangular matrix. Set $\hat{K}$ is the rank-$s$ approximation of $K$. Then,
% $$\hat{K}=PP^\top=\left[\begin{array}{ccc}
%P_\mathbb{B}P_\mathbb{B}^\top&P_\mathbb{B}P_\mathbb{N}^\top\\
%P_\mathbb{N}P_\mathbb{B}^\top&P_\mathbb{N}P_\mathbb{N}^\top
%\end{array}\right]=\left[\begin{array}{ccc}
%K_\mathbb{BB}&K_\mathbb{BN}\\
%K_\mathbb{NB}&K_\mathbb{NN}
%\end{array}\right].$$

Because $P_\mathbb{B}P_\mathbb{B}^\top$ is the (incomplete) Cholesky factorization of $K_\mathbb{BB}$, we have
$$\frac{1}{\lambda_s(K_\mathbb{BB})}=\|K_{\mathbb{BB}}^{-1}\|_2=\|P_\mathbb{B}^{-1}\|_2^2\leq \frac{4^s+6s-1}{9(P_\mathbb{B})_{ss}^2}\leq \frac{4^s}{(P_\mathbb{B})_{ss}^2}.$$
In the above formula, the first inequality holds according to \cite{Harbrecht2012icf}. Denote $t= \arg\max _{j\in\mathbb{N}_s}e_j^s$, then $$e_t^s=K_{tt}-P_{t1}^2-P_{t2}^2-\ldots-P_{ts}^2= p_t^2=(P_\mathbb{B})_{ss}^2. $$
The approximation error $\mathrm{tr}(K-\hat{K})$ is bounded by
\begin{equation}\label{eq:tr}
\begin{split}
\mathrm{tr}(K-\hat{K})&=\sum_{j\in \mathbb{N}_s}e_j^s\\
&\leq(n-s)(P_\mathbb{B})_{ss}^2\\
&\leq(n-s) 4^s\lambda_s(K_{\mathbb{BB}})\\
&\leq n 4^s\lambda_s(K)\\
&\leq nC\exp(-bs).
\end{split}
\end{equation}
This implies $s=O(\log({n/\epsilon}))$. The theorem is proven.
\end{proof}

In order to further verify the Theorem \ref{tm:convergence} and \ref{tm:convergence_2}, we conducted experiments on datasets USPS and MNIST by using ICF. The eigenvalues of $K$ are decay exponentially for these two datasets. We applied Gaussian kernel function in experiments, $k(\mathrm x_i, \mathrm x_j)=\exp(-\sigma\|\mathrm x_i-\mathrm x_j\|^2)$, where $\sigma$ is the parameter of the kernel function, which were set as $2^{-10}$ and $2^{-6}$ for these two datasets, respectively. Fig. \ref{fig:icf_error} gives the experimental results. It shows that the approximation error decreases exponentially as the increasing of iterations $s$.
%Finding the cut-off point from fast to slow is a meaningful subject to research.
Theorem \ref{tm:convergence_2} and the experimental results in Fig. \ref{fig:icf_error} show that ICF is a exponential convergence algorithm. As the approximation error drops rapidly at first, in fact, it is sufficient to set $s$ as a few hundred number to get satisfy results, where $s\leq r$.
\begin{figure}[h]
%\vskip 0.2in
\centering
 \subfigure[USPS]{
    %\label{fig:error_a}
    \includegraphics[width=0.48\textwidth]{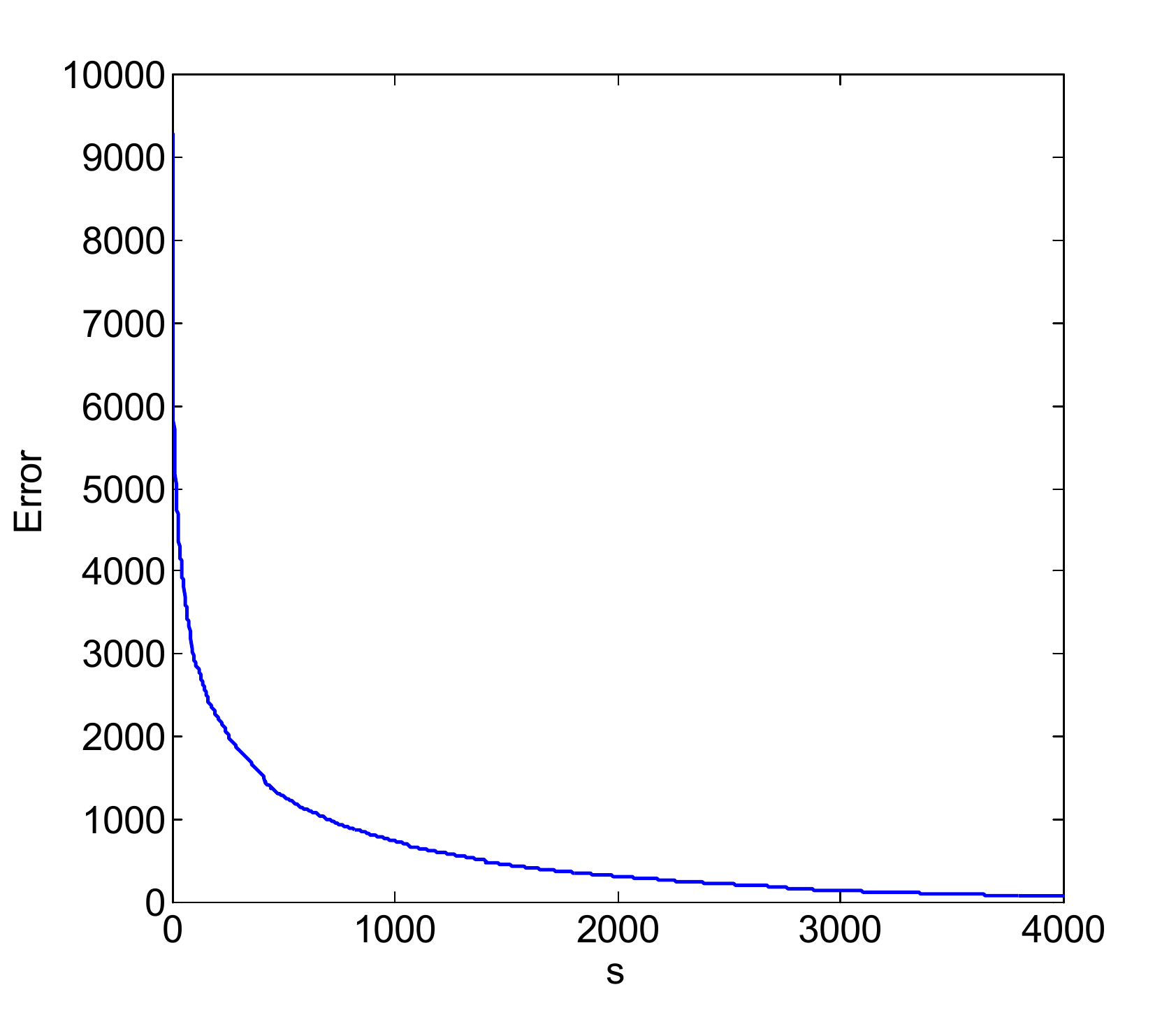}} %\hspace{1in}
 \subfigure[MNIST]{
    %\label{fig:error_b}
    \includegraphics[width=0.48\textwidth]{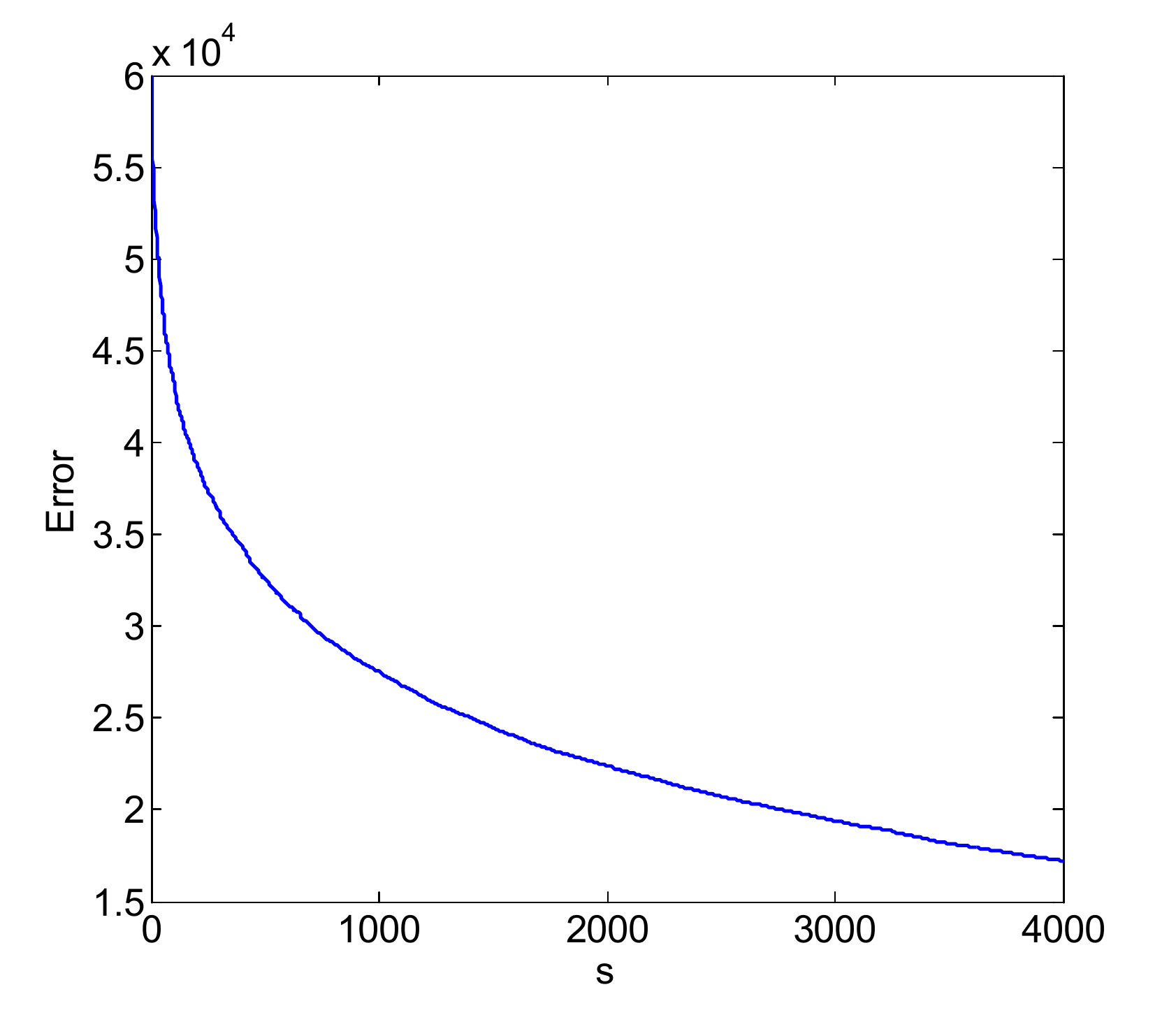}}
 \caption{The approximation error between $K$ and $\hat{K}$. The $\hat{K}$ is generated by ICF. The $x$-axis is the rank of $\hat{K}$, and the $y$-axis is the error $\mathrm {tr}(K-\hat{K})$. The error decreases exponentially as $s$ increases.}
 \label{fig:icf_error}
\end{figure}
\section{Kernel $k$-means clustering using incomplete Cholesky factorization}

The runtime complexity of kernel $k$-means clustering is very high, which leads to kernel $k$-means algorithms running slowly and can not deal with large-scale datasets. The main reason is that standard kernel $k$-means algorithm needs to compute entire kernel matrix. In this section, we apply incomplete Cholesky factorization method to obtain the low rank approximation of the kernel matrix. The new algorithm avoids computing the entire kernel matrix, reducing the computational complexity and the storage space of kernel $k$-means clustering, but it can achieve comparable clustering performance as standard kernel $k$-means using the entire kernel matrix.

ICF algorithm outputs a matrix $P$ so that kernel matrix $K\approx PP^\top$. When $K$ has the eigenvalue decomposition $K=UDU^\top=(UD^{\frac{1}{2}})(UD^{\frac{1}{2}})^\top$, $P^\top\approx D^{\frac{1}{2}}U^\top$. Therefore, the problem \eqref{eq:kkmeans2} is equivalent to the following optimization problem:
\begin{equation}\label{eq:kkmeans3}
 \mathop{\arg\min}\limits_{C_1,\ldots, C_k} \frac{1}{n}\sum\limits_{i=1}^c\sum\limits_{j\in C_i}\| P^\top_{\cdot, j}-\frac{1}{|C_i|}\sum\limits_{l\in C_i}P^\top_{\cdot, l}\|_2^2
\end{equation}
where $P^\top_{\cdot, j}$ denotes the columns of $P^\top$.
The idea of the kernel $k$-means clustering using incomplete Cholesky factorization is that run Algorithm \ref{alg:icf} first to obtain matrix $P$, and then run $k$-means clustering algorithm using the columns of $P^\top$ as the input data to get the clustering results.  Algorithm \ref{alg:kkmeans-icf} lists the new algorithm in detail.

\begin{algorithm}[ht]
\renewcommand{\algorithmicrequire}{\textbf{Input:}}
\renewcommand\algorithmicensure {\textbf{Output:} }
%\linespread{1.1}\selectfont
\caption{\textbf{Kernel $k$-means Clustering Using Incomplete Cholesky Factorization}}
\begin{algorithmic}[1]\label{alg:kkmeans-icf}
\REQUIRE Dataset $X\in\Re^{n\times d}$, kernel function $k(\cdot, \cdot)$, target dimensions $s$, number of clusters $k$.
\ENSURE The clustering results.
\STATE Run ICF algorithm to get the matrix $P\in \Re^{n\times s}$ such that $K\approx PP^\top$;
\STATE Perform $k$-means clustering over the columns of $P^\top$ to obtain the clustering results.
%\RETURN The clustering results.
\end{algorithmic}
\end{algorithm}

In the following, we bound the difference between the solutions of \eqref{eq:kkmeans2} and \eqref{eq:kkmeans3}. Firstly, we give the equivalent form of \eqref{eq:kkmeans2}. Let $V\in\Re^{n\times k}$ is a indicator matrix which has one non-zero element per row. When the $i$-th sample belongs to the $j$-th cluster, then $V_{ij}=1/\sqrt{|C_j|}$, where $|C_j|$ denotes the number of samples in cluster $j$. Note that $V^\top V \in\Re^{k\times k}$ is an identity matrix and $VV^\top\in\Re^{n\times n}$ is a symmetric matrix. Then,
 %via referring to \cite{Chitta2011approximate} and \cite{Chitta2012efficient} as follows:
 \begin{equation*}
 \begin{split}
 &\sum\limits_{i=1}^k\sum\limits_{j\in C_i}\|\mathrm k_j-\frac{1}{|C_i|}\sum\limits_{l\in C_i}\mathrm k_l\|_2^2=\|(D^{\frac{1}{2}}U^\top)- (D^{\frac{1}{2}}U^\top) VV^\top\|_F^2\\
 &=\mathrm {tr}((I-VV^\top)K(I-VV^\top))=\mathrm {tr}(K)-\mathrm {tr}(V^\top KV),
 \end{split}
 \end{equation*}
 where $I$ is a $n$ identity matrix. Therefore, \eqref{eq:kkmeans2} is equivalent to
\begin{equation}\label{eq:kkmeans_tr}
\mathop{\arg\max}\limits_V ~\frac{1}{n}\mathrm {tr}(V^\top K V).
\end{equation}
According to \eqref{eq:kkmeans_tr}, the equivalent form of \eqref{eq:kkmeans3} is
\begin{equation}\label{eq:kkmeans_tr2}
\mathop{\arg\max}\limits_V ~\frac{1}{n}\mathrm {tr}(V^\top \hat{K} V),
\end{equation}
where $\hat{K}=PP^\top$ is the approximation matrix of $K$. The following theorem bounds the difference between the solutions of \eqref{eq:kkmeans_tr} and \eqref{eq:kkmeans_tr2}.

\begin{thm}\label{th:convergence_kkmeans}
Let $V^\ast$ and $\hat{V}^\ast$ be the optimal solutions of \eqref{eq:kkmeans_tr} and \eqref{eq:kkmeans_tr2}, respectively. Assume the eigenvalues of $K$ decay exponentially: $\lambda_s (K) \leq C4^{-s}\exp(-bs)$
for some $C,~b > 0$ uniformly in $n$. We have
\begin{equation}
\frac{1}{n}\mathrm {tr}\left([V^\ast-\hat{V}^\ast]^\top K [V^\ast-\hat{V}^\ast]\right)\leq 2\sqrt{k}C \exp(-bs).
\end{equation}
\end{thm}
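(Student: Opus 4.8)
The plan is to reduce everything to the error matrix $E:=K-\hat{K}$, which is symmetric positive semidefinite, and to exploit the trace bound already established. By Theorem~\ref{tm:convergence_2} we have $\mathrm{tr}(E)\le nC\exp(-bs)$, and since $E\succeq 0$ the ordering of Schatten norms gives $\|E\|_F\le\|E\|_\ast=\mathrm{tr}(E)$. The single inequality that will manufacture the factor $\sqrt{k}$ is the following: for any $V$ with orthonormal columns ($V^\top V=I_k$), writing $\Pi=VV^\top$ for the associated rank-$k$ projection, $\mathrm{tr}(V^\top E V)=\langle E,\Pi\rangle\le\|E\|_F\|\Pi\|_F=\sqrt{k}\,\|E\|_F$, because $\mathrm{tr}(\Pi)=k$ and $\Pi^2=\Pi$. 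Both $V^\ast$ and $\hat{V}^\ast$ satisfy $V^\top V=I_k$, so this estimate applies to each of them.

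Next I would expand the quadratic form. Using the symmetry of $K$,
\[
\mathrm{tr}\!\left([V^\ast-\hat{V}^\ast]^\top K[V^\ast-\hat{V}^\ast]\right)=\mathrm{tr}(V^{\ast\top}KV^\ast)+\mathrm{tr}(\hat{V}^{\ast\top}K\hat{V}^\ast)-2\,\mathrm{tr}(V^{\ast\top}K\hat{V}^\ast).
\]
Substituting $K=\hat{K}+E$ and invoking the two optimality relations, $\mathrm{tr}(V^{\ast\top}KV^\ast)\ge\mathrm{tr}(\hat{V}^{\ast\top}K\hat{V}^\ast)$ from \eqref{eq:kkmeans_tr} and $\mathrm{tr}(\hat{V}^{\ast\top}\hat{K}\hat{V}^\ast)\ge\mathrm{tr}(V^{\ast\top}\hat{K} V^\ast)$ from \eqref{eq:kkmeans_tr2}, the aim is to cancel the $\hat{K}$-parts of the diagonal terms against the cross term so that only contributions built from $E$ remain. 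Each such contribution has the form $\mathrm{tr}(A^\top E B)$ with $A,B\in\{V^\ast,\hat{V}^\ast\}$; bounding it by the Cauchy--Schwarz inequality in the $E$-inner product, $|\mathrm{tr}(A^\top E B)|\le\sqrt{\mathrm{tr}(A^\top E A)}\,\sqrt{\mathrm{tr}(B^\top E B)}$, and then applying the per-factor estimate $\mathrm{tr}(V^\top E V)\le\sqrt{k}\,\|E\|_F$ from the first step, produces a total of $2\sqrt{k}\,\|E\|_F$. Chaining $\|E\|_F\le\mathrm{tr}(E)\le nC\exp(-bs)$ and dividing by $n$ would then yield the claimed bound $2\sqrt{k}\,C\exp(-bs)$.

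The step I expect to be the genuine obstacle is controlling the cross term $\mathrm{tr}(V^{\ast\top}K\hat{V}^\ast)$, equivalently showing that the two optimizers induce nearby embeddings $K^{1/2}V^\ast$ and $K^{1/2}\hat{V}^\ast$. Optimality of $V^\ast$ for $K$ and of $\hat{V}^\ast$ for $\hat{K}$ only bounds the two objective \emph{gaps} by $\mathrm{tr}(E)$; it does not by itself lower-bound the cross term, because writing $x=\hat{K}^{1/2}V^\ast$ and $y=\hat{K}^{1/2}\hat{V}^\ast$, the inequality $2\langle x,y\rangle\ge\|x\|^2+\|y\|^2$ that the cancellation secretly needs is $-\|x-y\|^2\ge 0$, which can fail. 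Consequently the argument implicitly requires a stability hypothesis, namely that the maximizer of $\mathrm{tr}(V^\top K V)$ be essentially unique (a nonzero spectral gap after the $k$-th eigenvalue of $K$), or at least a consistent tie-breaking shared by the two problems; without it, two distinct optima keep the left-hand side bounded away from $0$ even as $\mathrm{tr}(E)\to 0$. I would therefore make this non-degeneracy assumption explicit and use it, via a Davis--Kahan $\sin\Theta$ type estimate comparing the top-$k$ invariant subspaces of $K$ and $\hat{K}$, to supply the missing lower bound on the cross term before finishing the chain as above.
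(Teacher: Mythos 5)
Your first two steps are, almost symbol for symbol, the paper's own proof. The paper bounds $\mathrm{tr}[(V^\ast)^\top(K-\hat{K})V^\ast]=\mathrm{tr}[V^\ast(V^\ast)^\top(K-\hat{K})]\leq\sqrt{\mathrm{tr}[V^\ast(V^\ast)^\top]^2}\sqrt{\mathrm{tr}(K-\hat{K})^2}\leq\sqrt{k}\,\mathrm{tr}(K-\hat{K})$, which is exactly your estimate $\langle E,\Pi\rangle\leq\|\Pi\|_F\|E\|_F\leq\sqrt{k}\,\mathrm{tr}(E)$ using $\mathrm{tr}(VV^\top)=k$ and $\|E\|_F\leq\mathrm{tr}(E)$ for $E\succeq 0$; it then runs precisely your three-inequality chain, $\mathrm{tr}[(V^\ast)^\top KV^\ast]\leq\mathrm{tr}[(V^\ast)^\top\hat{K}V^\ast]+\sqrt{k}\,\mathrm{tr}(E)\leq\mathrm{tr}[(\hat{V}^\ast)^\top\hat{K}\hat{V}^\ast]+\sqrt{k}\,\mathrm{tr}(E)\leq\mathrm{tr}[(\hat{V}^\ast)^\top K\hat{V}^\ast]+2\sqrt{k}\,\mathrm{tr}(E)$, using the optimality of $\hat{V}^\ast$ for \eqref{eq:kkmeans_tr2} in the middle step, and closes by invoking \eqref{eq:tr} to get $\mathrm{tr}(E)\leq nC\exp(-bs)$, just as you do.

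The instructive difference lies in your final paragraph, and there your suspicion is vindicated: the obstacle you isolated is not resolved in the paper. After the chain of inequalities, the paper simply writes ``Therefore, $\mathrm{tr}\left([V^\ast-\hat{V}^\ast]^\top K[V^\ast-\hat{V}^\ast]\right)\leq 2\sqrt{k}\,\mathrm{tr}(K-\hat{K})$'' with no further argument. But the chain only controls the objective gap $\mathrm{tr}[(V^\ast)^\top KV^\ast]-\mathrm{tr}[(\hat{V}^\ast)^\top K\hat{V}^\ast]$, whereas the quadratic form of the difference expands as the sum of the two diagonal terms minus twice the cross term $\mathrm{tr}[(V^\ast)^\top K\hat{V}^\ast]$; for the asserted conclusion one would additionally need $\mathrm{tr}[(\hat{V}^\ast)^\top K\hat{V}^\ast]\leq\mathrm{tr}[(V^\ast)^\top K\hat{V}^\ast]$, which does not follow from optimality of either solution. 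Your tie-breaking counter-scenario makes the failure mode concrete: when $\hat{K}=K$ the two problems coincide, two distinct maximizers keep the left-hand side bounded away from zero while the right-hand side can be driven small. So the gap you predicted is a gap in the published proof, not an artifact of your reconstruction. Your proposed repair (an explicit non-degeneracy hypothesis plus a Davis--Kahan $\sin\Theta$ perturbation argument) is a reasonable route to a rigorous version, with one caveat you should note: $V^\ast$ and $\hat{V}^\ast$ range over discrete normalized cluster-indicator matrices, not over arbitrary orthonormal frames, so eigenspace perturbation bounds do not apply to them directly; one would need either a margin/uniqueness assumption at the combinatorial level, or to mediate through the spectral relaxation of \eqref{eq:kkmeans_tr}. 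As it stands, the paper's theorem should be read as a bound on the difference of objective values, which is what its proof actually establishes.
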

\begin{proof}
Because
\begin{equation*}
\begin{split}
\mathrm {tr}[(V^\ast)^\top(K-\hat{K})V^\ast]&=\mathrm {tr}[V^\ast (V^\ast)^\top(K-\hat{K})]\\
&\leq \sqrt{\mathrm{tr}[V^\ast (V^\ast)^\top]^2}\sqrt{\mathrm{tr}(K-\hat{K})^2}\\
&\leq \sqrt{k}~\mathrm {tr}(K-\hat{K}),
\end{split}
\end{equation*}
we have
\begin{equation*}
\begin{split}
\mathrm {tr}[(V^\ast)^\top KV^\ast]&\leq \mathrm {tr}[(V^\ast)^\top \hat{K}V^\ast]+\sqrt{k}~\mathrm {tr}(K-\hat{K})\\
&\leq \mathrm {tr}[(\hat{V}^\ast)^\top \hat{K}\hat{V}^\ast]+\sqrt{k}~\mathrm {tr}(K-\hat{K})\\
&\leq \mathrm {tr}[(\hat{V}^\ast)^\top {K}\hat{V}^\ast]+2\sqrt{k}~\mathrm {tr}(K-\hat{K}).
\end{split}
\end{equation*}
Therefore,
$$\mathrm {tr}\left([V^\ast-\hat{V}^\ast]^\top K [V^\ast-\hat{V}^\ast]\right)\leq 2\sqrt{k}~\mathrm {tr}(K-\hat{K}).$$
The proof is completed by using \eqref{eq:tr}.
\end{proof}

Theorem \ref{th:convergence_kkmeans} indicates that the approximation error of the kernel $k$-means clustering using ICF reduces as $s$ increases. The rate of decline is exponential.

\textbf{Computational Complexity.} The Algorithm \ref{alg:kkmeans-icf} only consists of two steps. The first step is to perform ICF algorithm, the complexity of which is $O(ns^2)$ \cite{sszhou2016}. The second step is to run $k$-means clustering on $n\times s$ matrix $P$, which takes $O(Tnsk)$ time, where $T$ is the number of iterations required for convergence. Hence, the total computational complexity of Algorithm \ref{alg:kkmeans-icf} is $O(ns^2+Tnsk)$. By comparison, directly solving \eqref{eq:kkmeans2} by using entire kernel matrix takes $O(n^3+ n^2d + Tn^2k)$ time. Therefore, our ICF-based method greatly reduces the computational complexity.

\section{Experiments}

In order to measure the performance of the new algorithm, we compare our proposed algorithm with kernel $k$-means clustering and some of its improved algorithms in terms of clustering accuracy and time consumption. The first set of experiments was carried on three $2$-dimensional synthetic datasets to show that the new algorithm can cluster non-linear data points well. The second set of experiments performs on several real-world datasets. The experimental results on medium-sized datasets demonstrate that the proposed algorithm is not only faster than the kernel $k$-means clustering, but also can obtain as good performance as the kernel $k$-means algorithm in terms of clustering accuracy. For large-sized real-world datasets, the full kernel matrix is infeasible, we only compare the performance of the proposed algorithm with improved $k$-means algorithms. Gaussian kernel function $k(\mathrm x_i, \mathrm x_j)=\exp(-\sigma \|\mathrm x_i-\mathrm x_j\|^2)$ was used for all the datasets. All algorithms were implemented in MATLAB and run on a 2.40 GHz processor with 8 GB RAM.

\subsection{Synthetic datasets experiments}

In order to show the clustering effect of the proposed algorithm, we generate three datasets named Ring, Parabolic and Zigzag, which cannot be clustered well by $k$-means algorithm. Each dataset contains two clusters, and each cluster contains $500$ data points. The number of sampled data points is set as $50$ in ICF for all the datasets. The parameter $\sigma$ in the Gaussian kernel function is set as $2^4$, $2^{1}$ and $2^{3}$ for Ring, Parabolic and Zigzag datasets, respectively. Fig. \ref{fig:Synthetic} gives the experimental results. Fig. \ref{fig:Synthetic} illustrates that the new algorithm can cluster data points well even only using $5\%$ points. The clustering accuracy for each dataset is $100\%$. This validates the performance of the new algorithm very well.

\begin{figure}
\centering
 \subfigure[Ring]{
    %\label{fig:error_a}
    \includegraphics[width=0.31\textwidth]{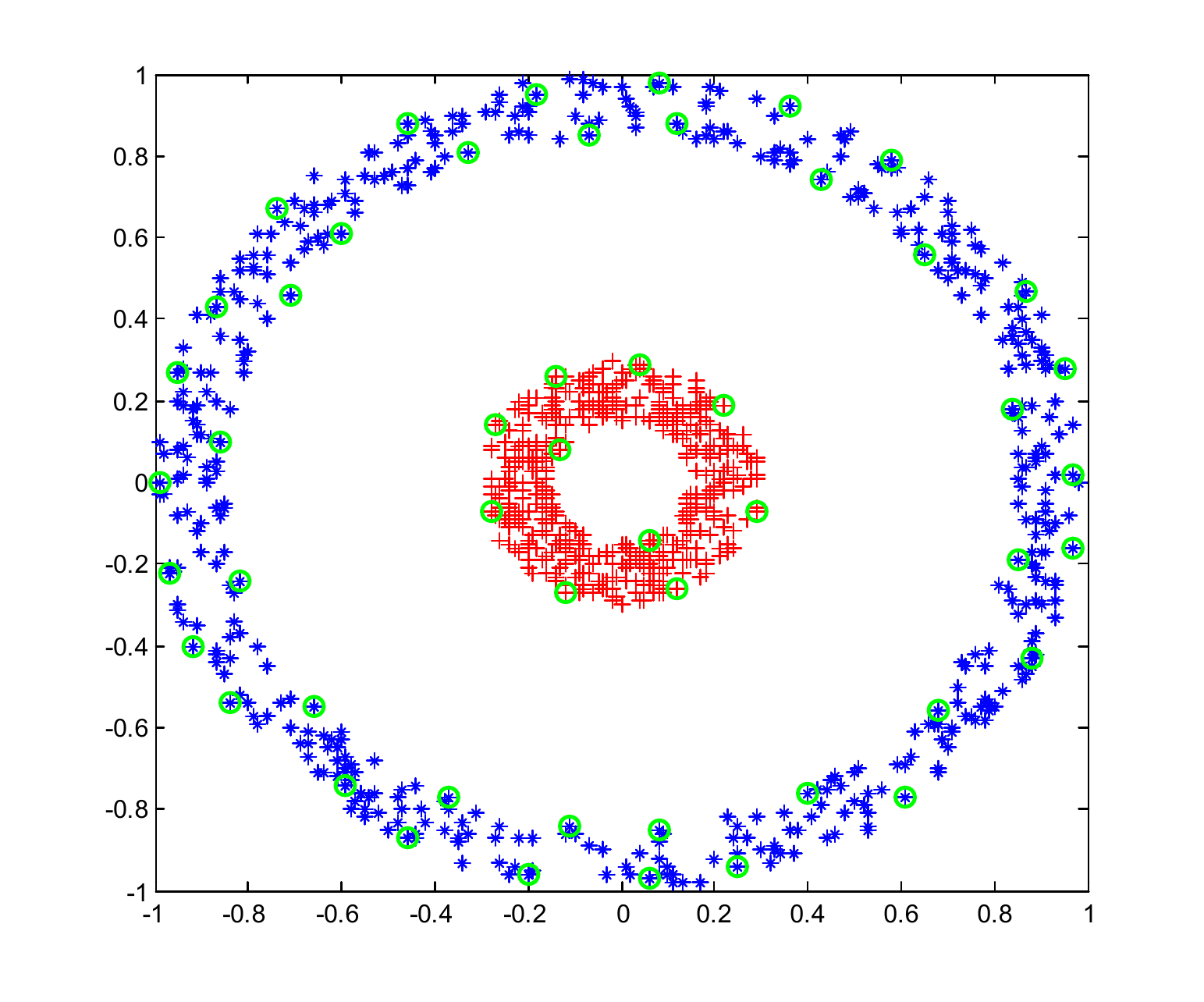}} %\hspace{1in}
 \subfigure[Parabolic]{
    %\label{fig:error_b}
    \includegraphics[width=0.31\textwidth]{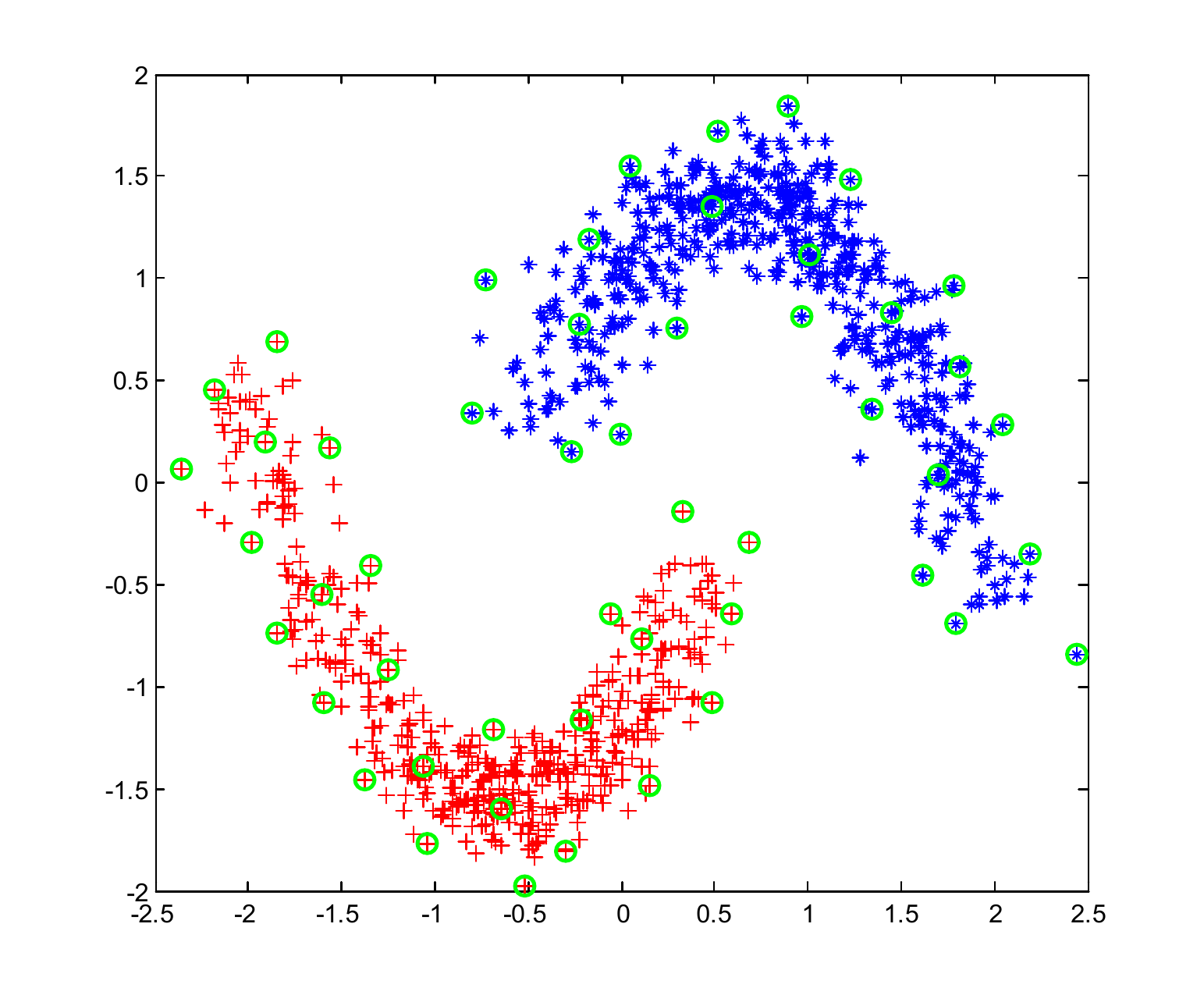}}
 \subfigure[Zigzag]{
    %\label{fig:error_b}
    \includegraphics[width=0.31\textwidth]{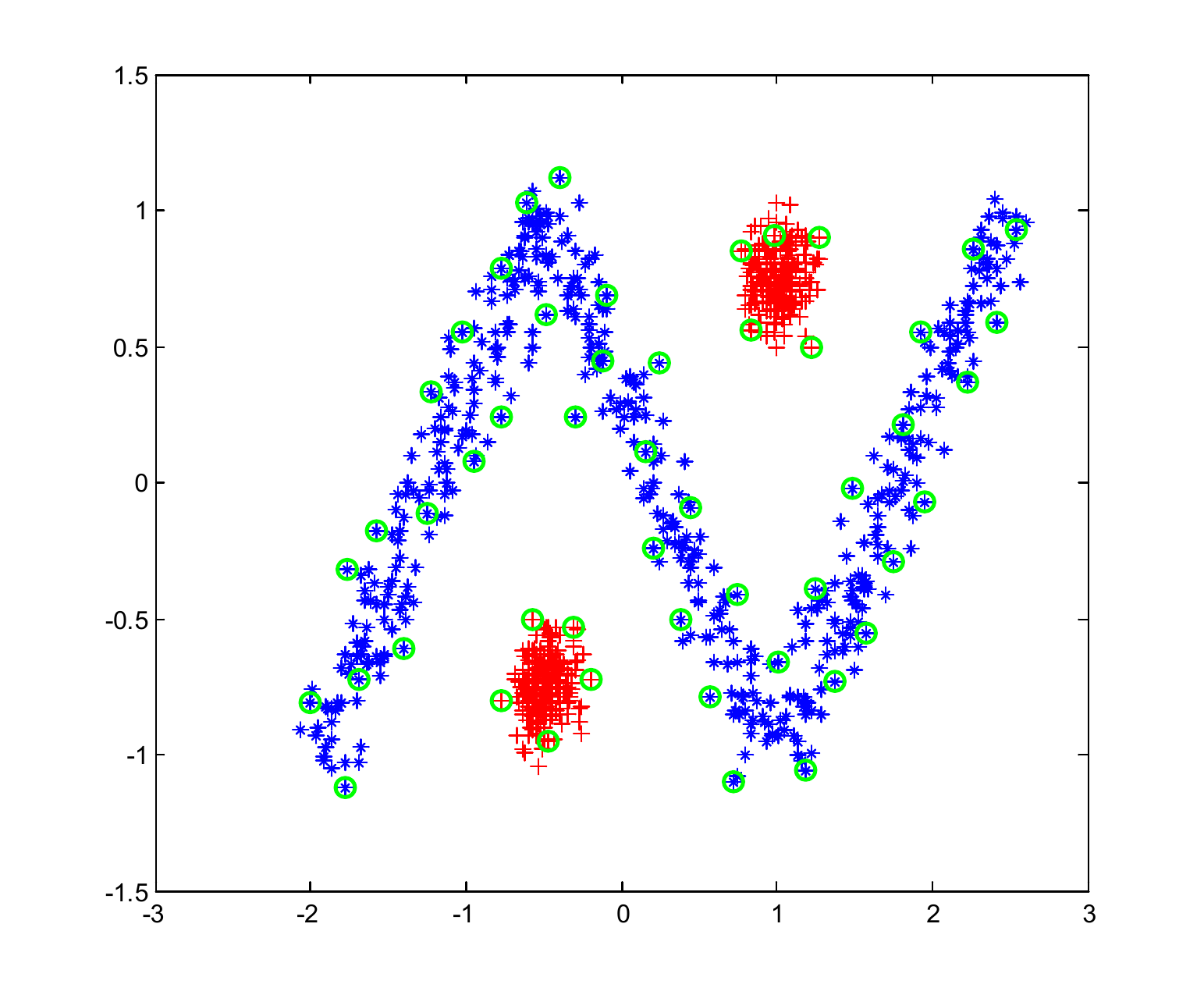}}
 \caption{Clustering performance of the proposed algorithm on three $2$-dimensional synthetic datasets. The green circles are the sampled data by the ICF. The clustering accuracy for each dataset is $100\%$. }
 \label{fig:Synthetic}
\end{figure}

\subsection{Real-world datasets experiments}
In order to evaluate the performance of the proposed algorithm, we compare it with several state-of-art kernel clustering algorithms on four real-world datasets in terms of clustering accuracy and time.
\subsubsection{Datasets}
We use two medium-size dataset and two large-size datasets to evaluate the performance of the algorithms. These datasets can be downloaded from LIBSVM website \footnote{https://www.csie.ntu.edu.tw/~cjlin/libsvmtools/datasets/}.
\begin{itemize}
\item \textbf{PenDigits:}The pen-based recognition of handwritten digits dataset contains $7,494$ training samples and $3,498$ test samples from $10$ classes. We combine them to form a dataset containing $10,992$ samples. Each sample is represented by a $16$-dimensional vector.
\item \textbf{Satimage:} This dataset contains 4,435 training points and 2,000 test points with $6$ classes. We combine them to form a dataset containing $10,992$ points. The dimension of the data is 36.
\item \textbf{Shuttle:} This is a dataset with $7$ classes containing $43,500$ data points. Each points has $9$ features. %The dataset has been scaled to $[-1,1]$.
\item \textbf{Mnist:} This is a handwritten digits dataset containing $60,000$ data points. Each point is described by a vector of $780$ dimensions and assigned to one of $10$ classes, each class representing a digit.%The dataset is scaled to $[0,1]$ by dividing each feature by 255.
\end{itemize}
\subsubsection{Baseline algorithms}\label{algorithms}
We compare our algorithm with the kernel $k$-means algorithms to verify that they achieve similar clustering accuracy. We also compare the proposed algorithm with improved kernel $k$-means algorithms, in which full kernel matrix need not be computed. The comparison algorithms are listed as follows:
\begin{itemize}
\item \textbf{Kernel:} The kernel $k$-means algorithm \cite{Scholkopf1998kkmeans} proposed by Sch\"{o}lkopf et al. This method requires to calculate the entire kernel matrix. The code has been included in the Matlab package.
\item \textbf{Kernel$+$Chol:} This algorithm calculates the entire kernel matrix first, then complete Cholesky factorization is used to decompose the entire kernel matrix. Finally, the $k$-means clustering algorithm is adopted on the rows of the decomposed matrix to obtain the clustering results.
\item \textbf{Approx:} The approximate kernel $k$-means algorithm \cite{Chitta2011approximate}, which employs a randomly selected subset of the data to compute the cluster centers.
\item \textbf{RFF:} The random fourier feature (RFF) kernel $k$-means clustering algorithm is proposed in \cite{Chitta2012efficient}. This algorithm applying RFF method to approximate the full kernel matrix, and then the $k$-means clustering is used to the points in the transformed space.
\item \textbf{Nystr\"{o}m:} The entire kernel matrix is approximated as $K_{\mathbb{MB}}K_{\mathbb{BB}}^\dag K_{\mathbb{MB}}^\top$ by Nystr\"{o}m method in this algorithm, where $\mathbb{B}$ is randomly sampled from $\mathbb{M}$. Then $k$-means clustering is applied on the rows of $K_{\mathbb{MB}}K_{\mathbb{BB}}^{-\frac{1}{2}}$.
\item \textbf{ICF:} The kernel $k$-means clustering using ICF is proposed by this paper.
\end{itemize}
\subsubsection{Parameters}
We use the Gaussian kernel function for all the algorithms. The kernel parameters $\sigma$ are set as $2^{-16}$, $2^{-3}$, $2$ and $2^{-6}$ for PenDigits, Satimage, Shuttle and Mnist datasets, respectively. The number of elements in $\mathbb{B}$ is denoted by ``subsetsize", which is varied from $25$ to $1000$. For approximate kernel $k$-means, RFF kernel $k$-means and Nystr\"{o}m kernel $k$-means algorithms, ``subsetsize" is the size of randomly selected subset, the number of Fourier components and the number of elements in $\mathbb{B}$, respectively. The error bound $\epsilon$ in ICF is set as $10^{-3}$ for all the datasets. The maximum number of iterations are set as $1000$ for $k$-means and kernel $k$-means. Each experimental result is the average of 10 independent experiments.
\subsubsection{Experimental results}

\begin{figure}[ht]
\centering
 \subfigure[PenDigits]{
    \label{fig:PenDigits_acc}
    \includegraphics[width=0.48\textwidth]{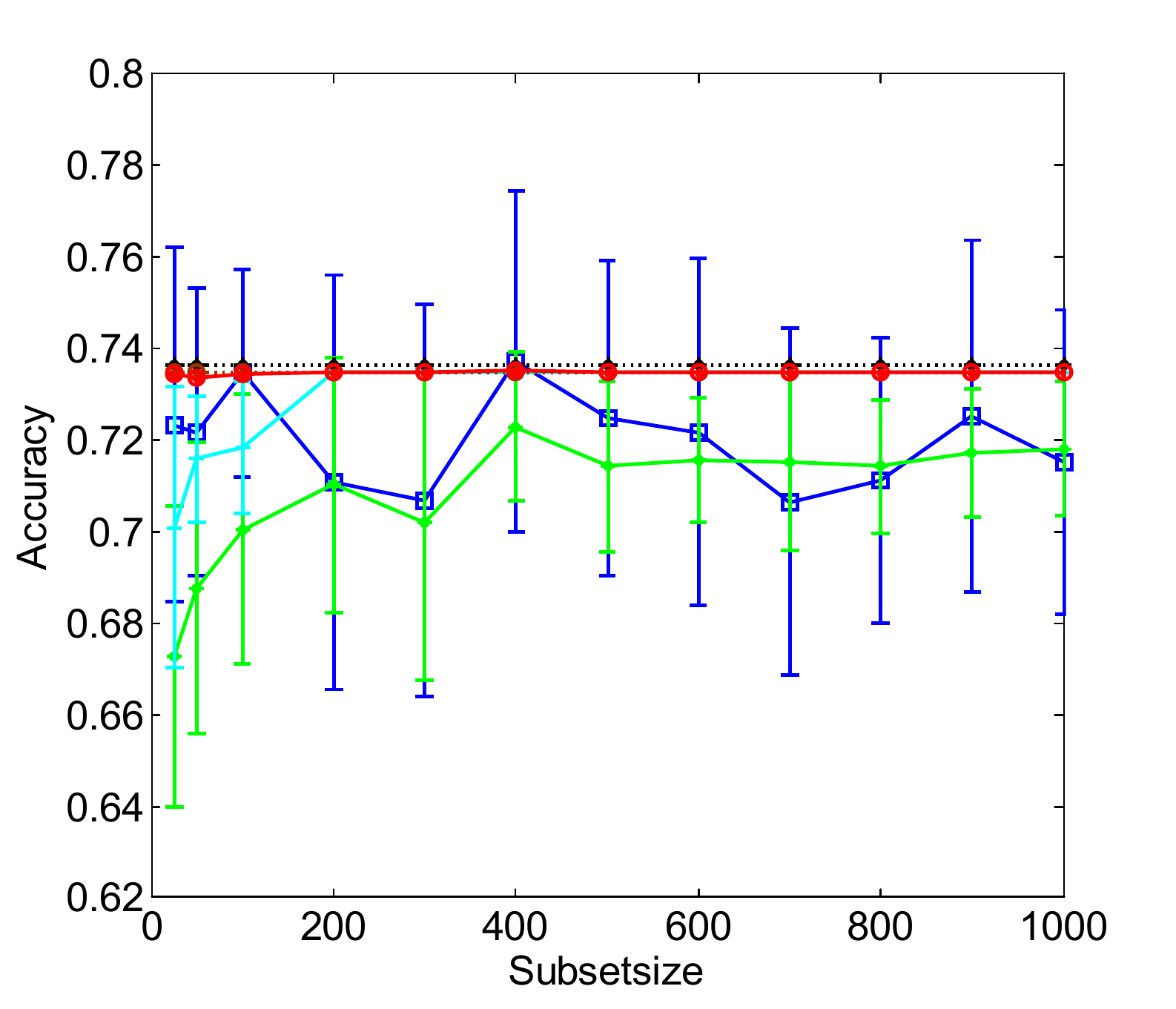}} %\hspace{1in}
 \subfigure[Satimage]{
    \label{fig:Satimage_acc}
    \includegraphics[width=0.48\textwidth]{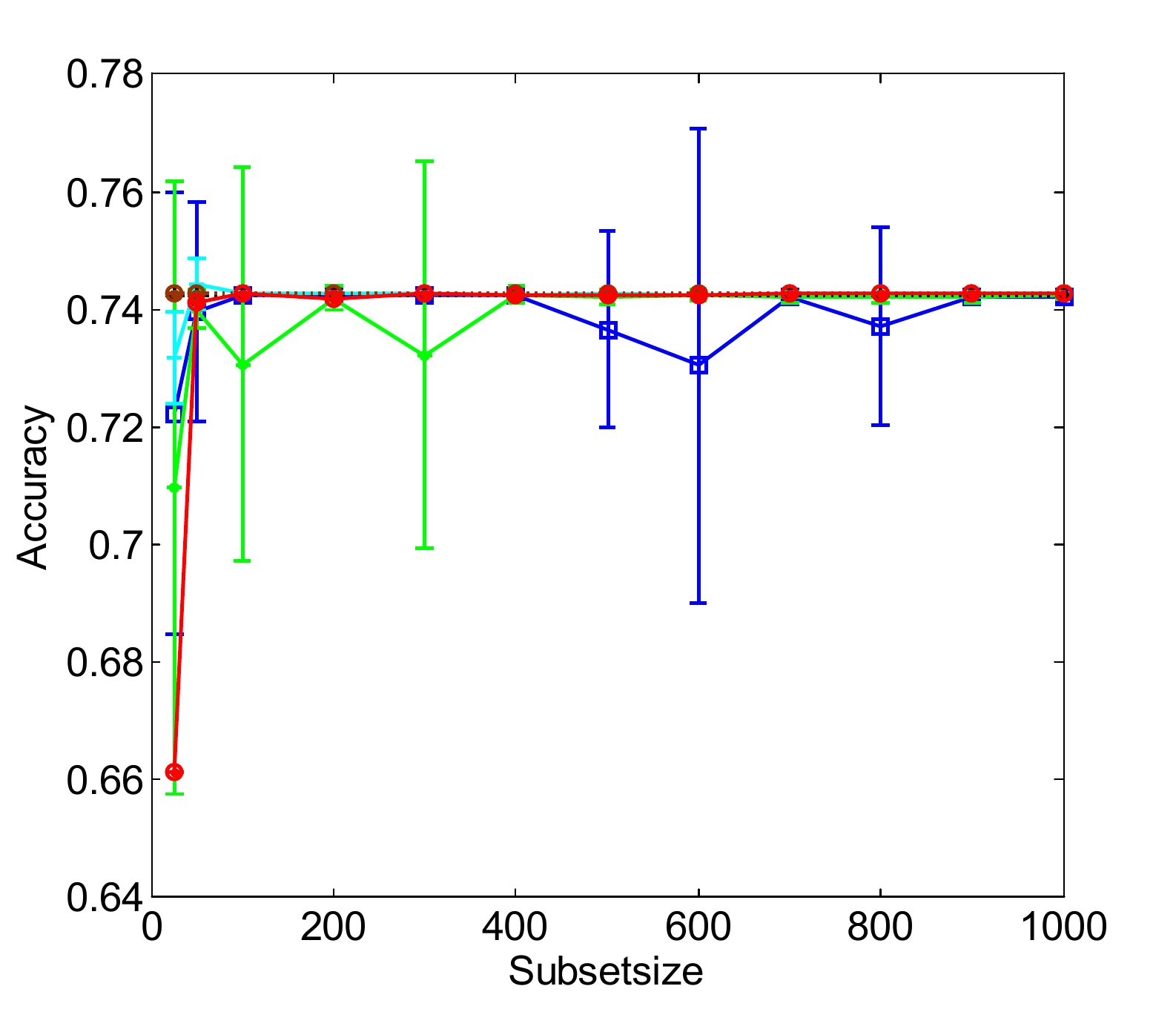}} \\%\hspace{1in}
 \subfigure[Shuttle]{
    \label{shuttle_acc}
    \includegraphics[width=0.48\textwidth]{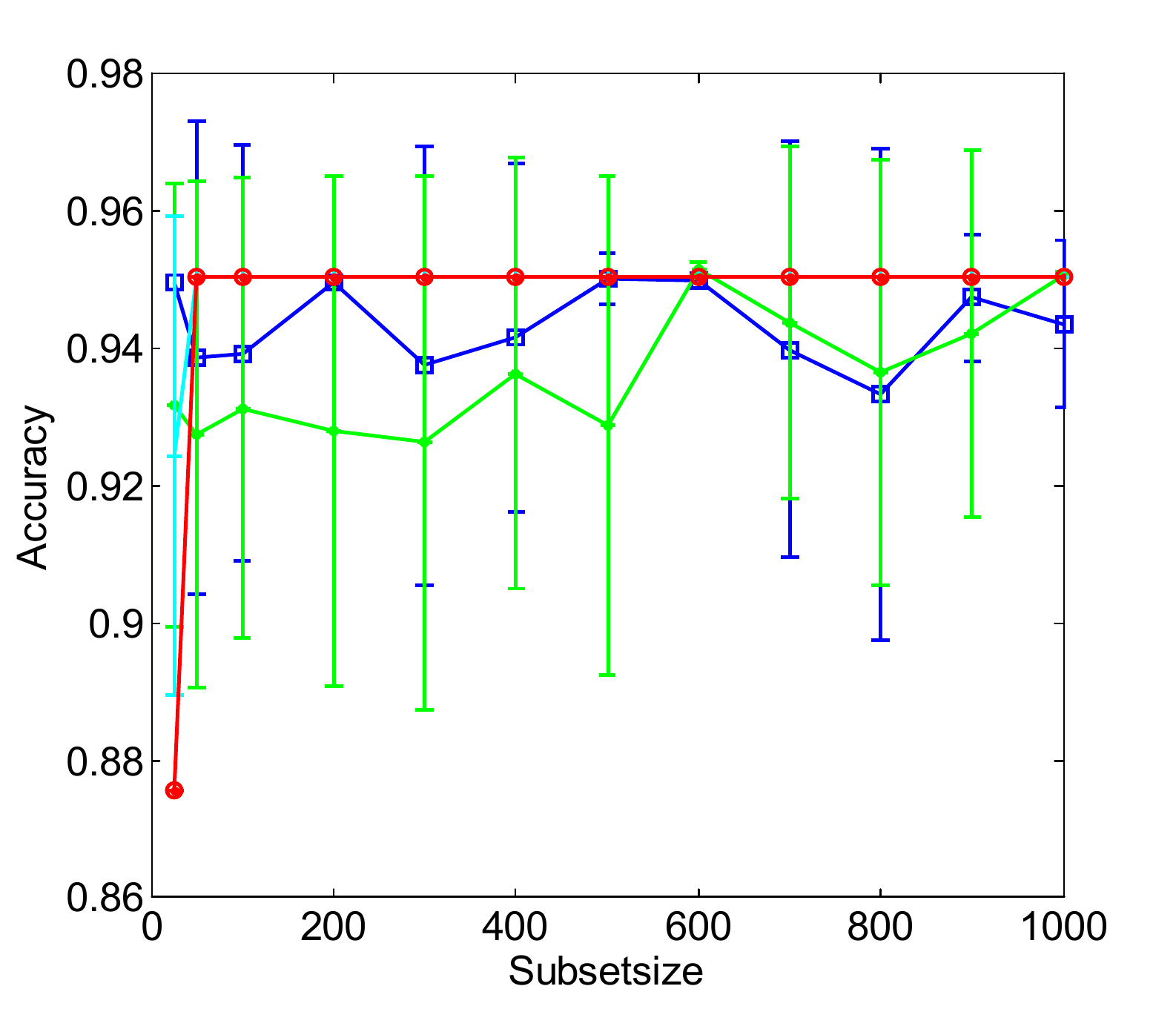}}
 \subfigure[Mnist]{
    \label{fig:mnist_acc}
    \includegraphics[width=0.48\textwidth]{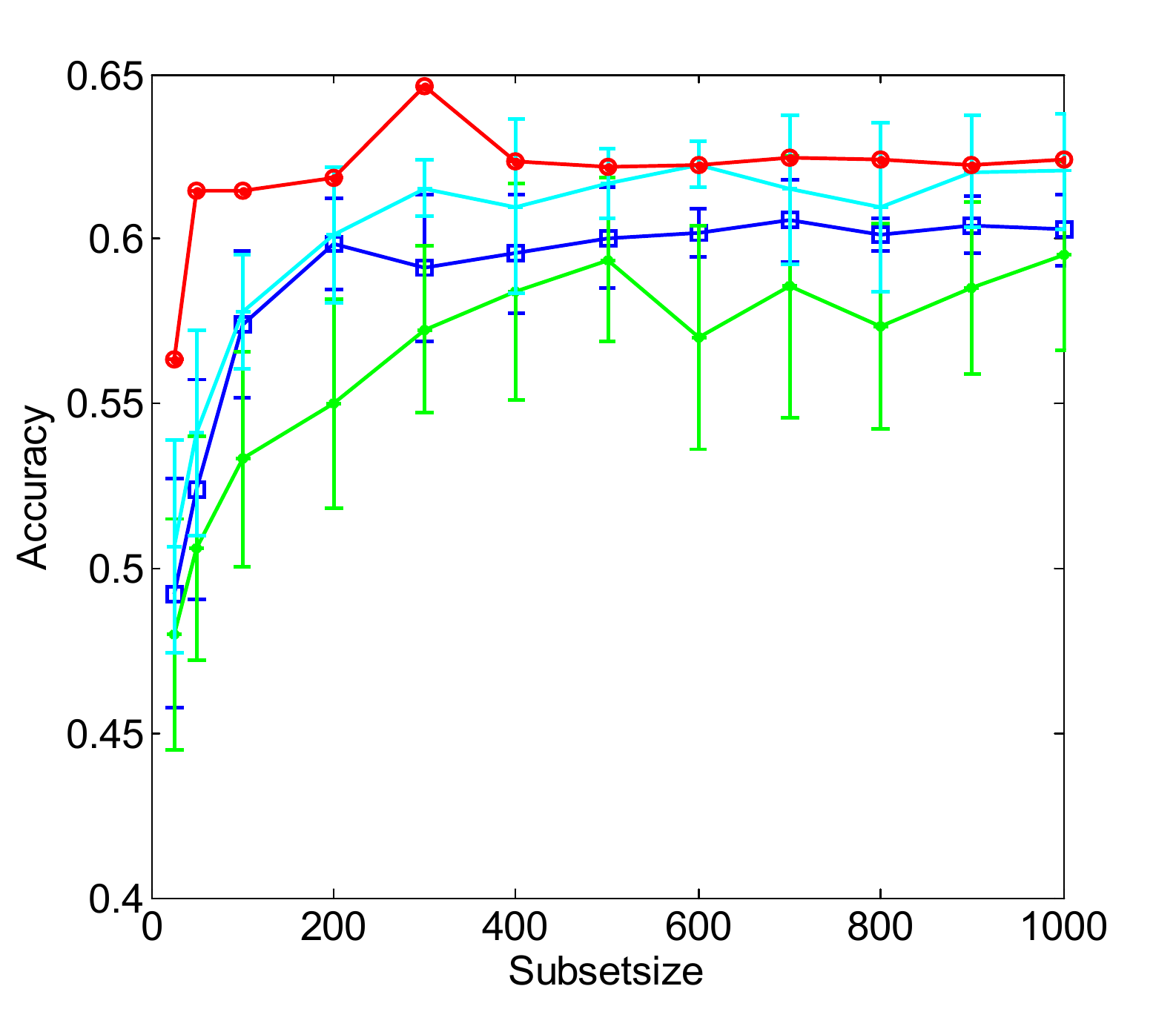}}\\
  \subfigure{
    \includegraphics[width=0.8\textwidth]{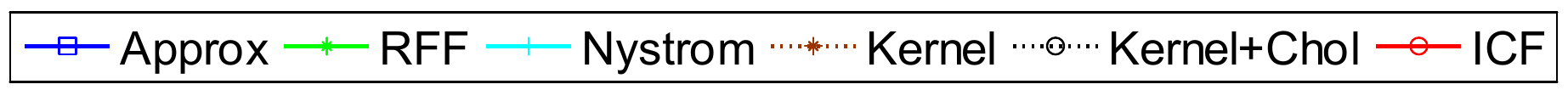}}
 \caption{Clustering accuracy comparison of algorithms on real-world datasets. }
 \label{fig:Real-world_acc}
\end{figure}
\begin{figure}[h]
\centering
 \subfigure[PenDigits]{
    \label{fig:Pendigits_time}
    \includegraphics[width=0.48\textwidth]{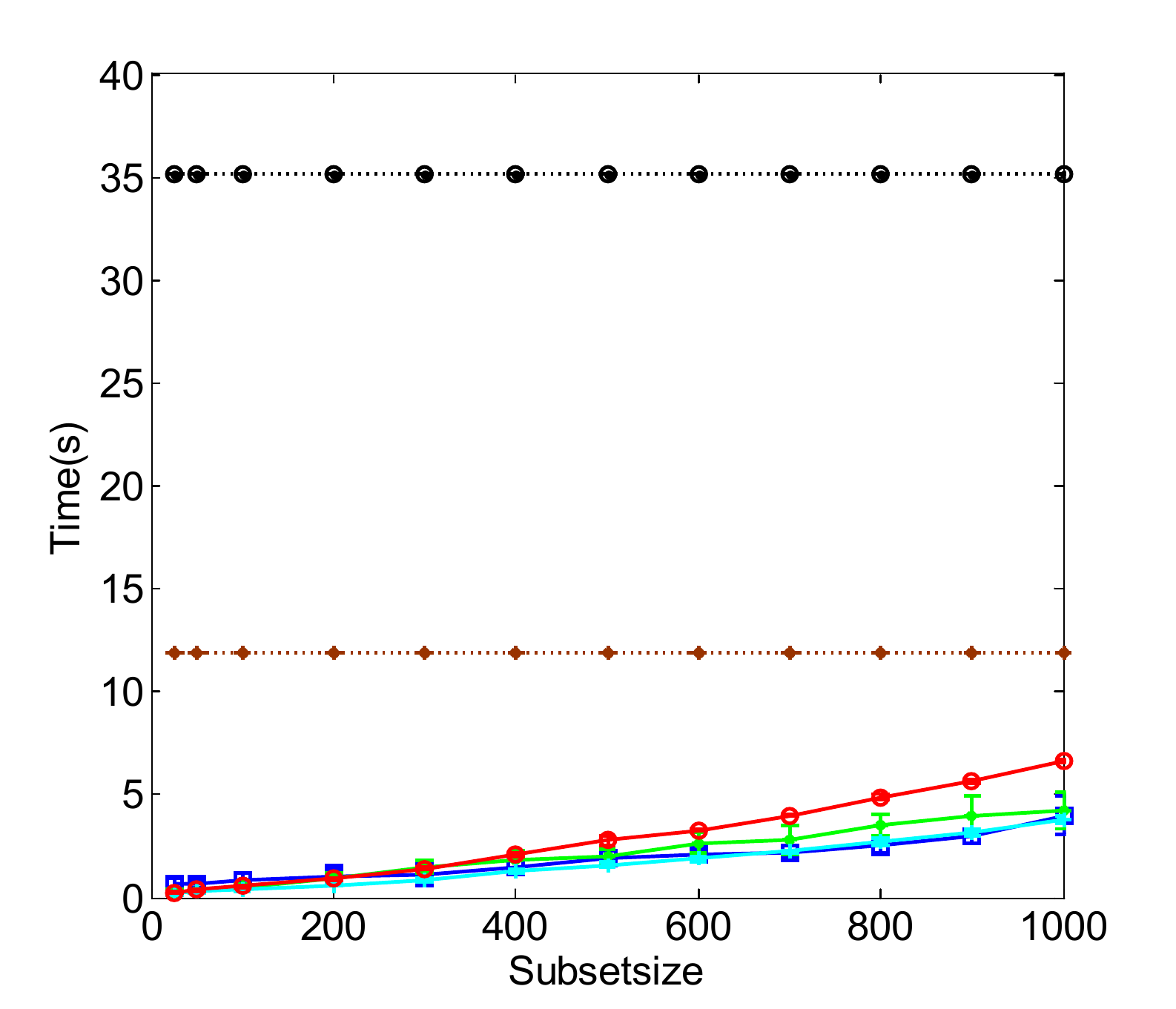}} %\hspace{1in}
 \subfigure[Satimage]{
    \label{fig:Satimage_time}
    \includegraphics[width=0.48\textwidth]{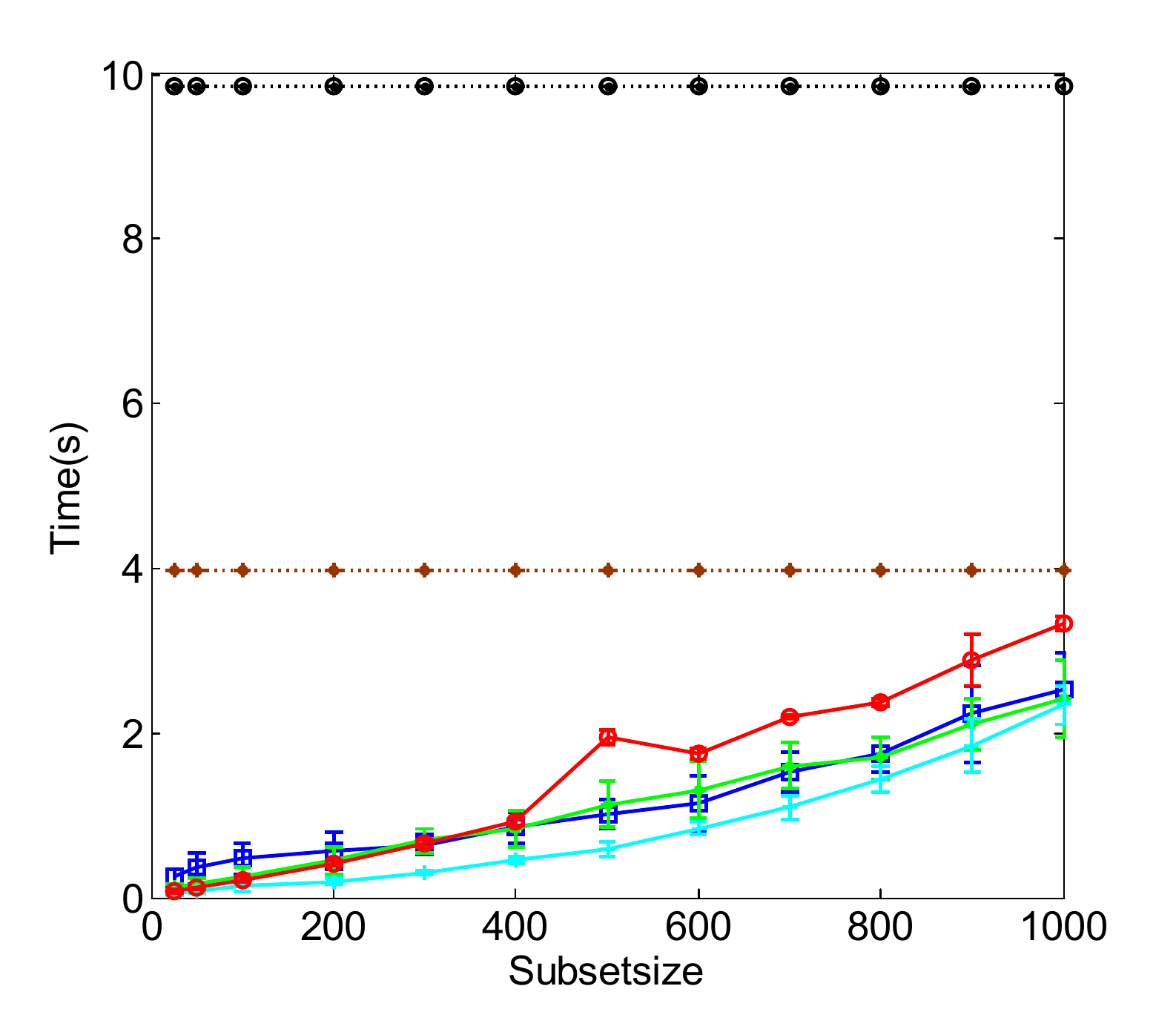}} \\%\hspace{1in}
 \subfigure[Shuttle]{
    \label{shuttle_time}
    \includegraphics[width=0.48\textwidth]{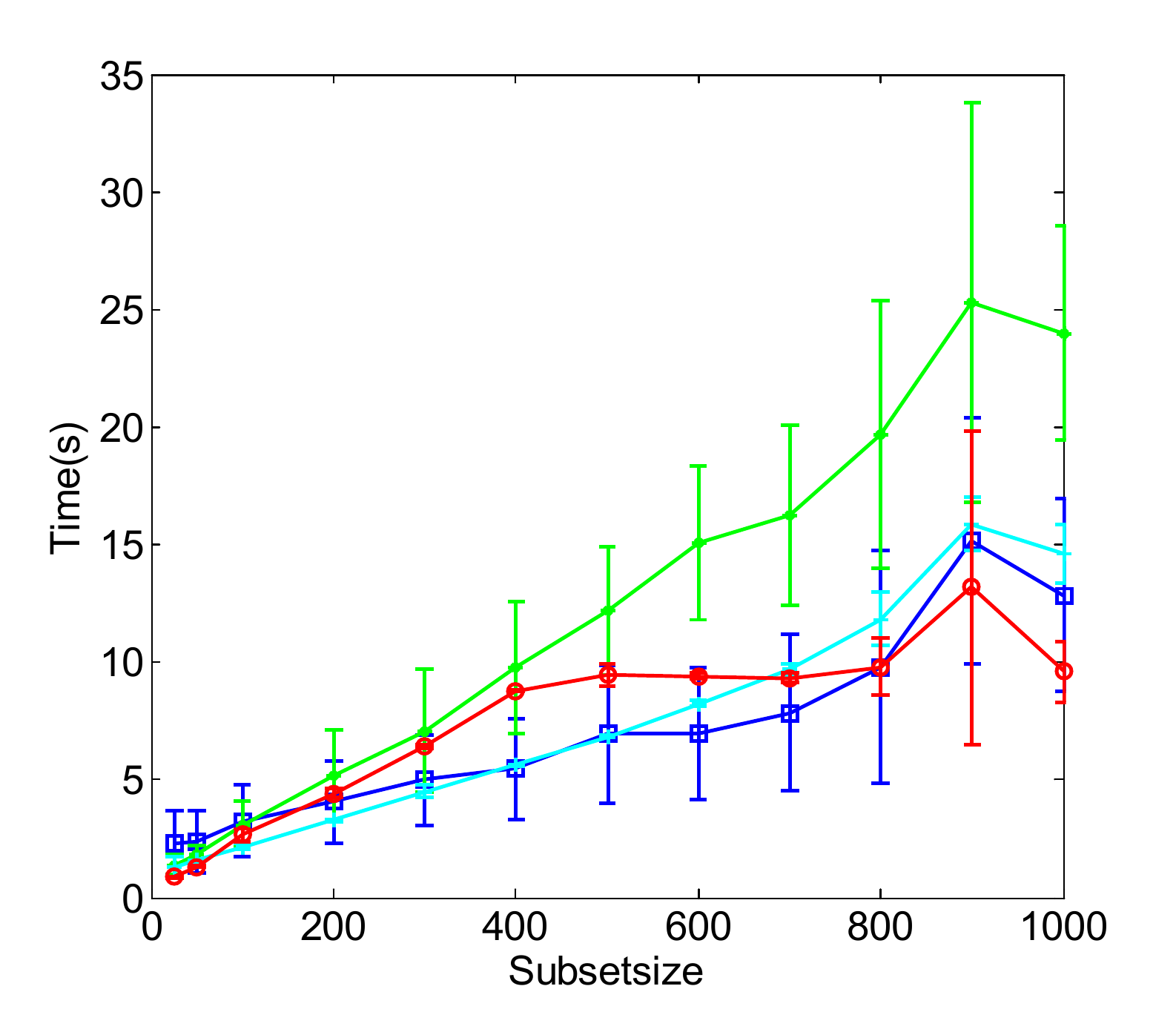}}
 \subfigure[Mnist]{
    \label{fig:mnist_time}
    \includegraphics[width=0.48\textwidth]{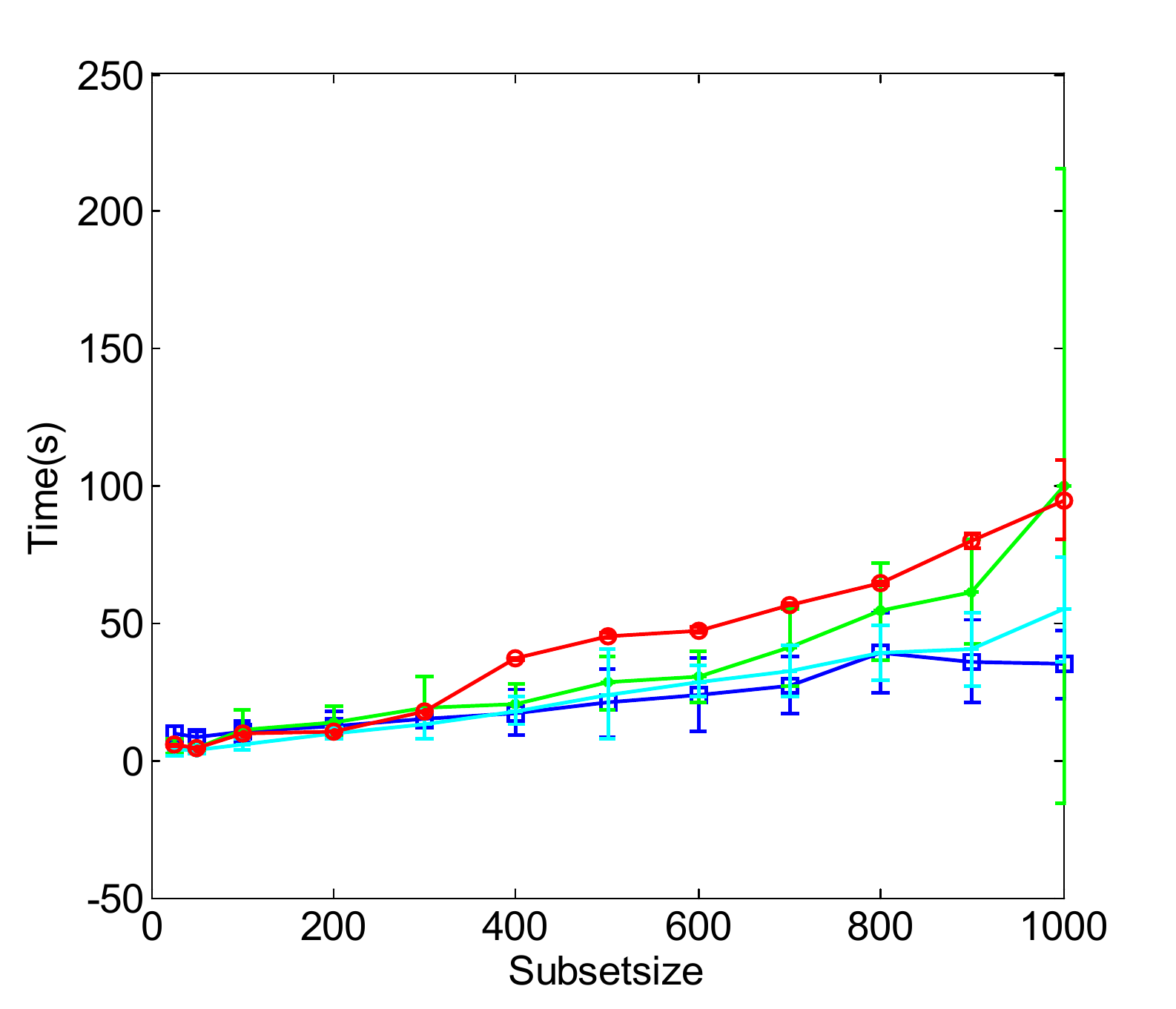}}\\
 \subfigure{
    \includegraphics[width=0.8\textwidth]{tuli_kkmeans}}
 \caption{Clustering time comparison of algorithms on real-world datasets. }
 \label{fig:Real-world_time}
\end{figure}

\begin{figure}[h]
\centering
 \subfigure[PenDigits, $s=25$]{
    \label{fig:Pendigits_time_opt}
    \includegraphics[width=0.48\textwidth]{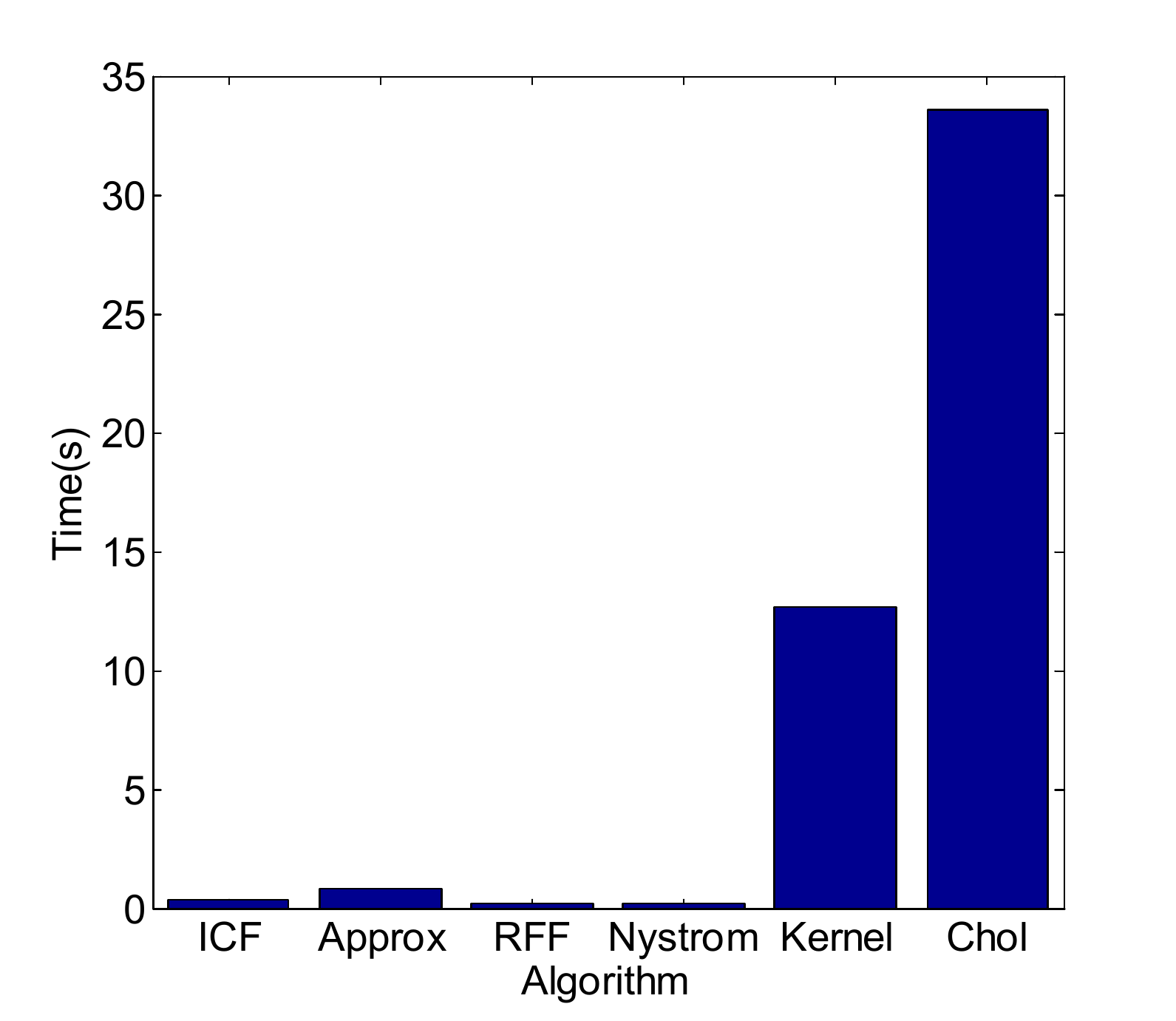}} %\hspace{1in}
 \subfigure[Satimage, $s=50$]{
    \label{fig:Satimage_time_opt}
    \includegraphics[width=0.48\textwidth]{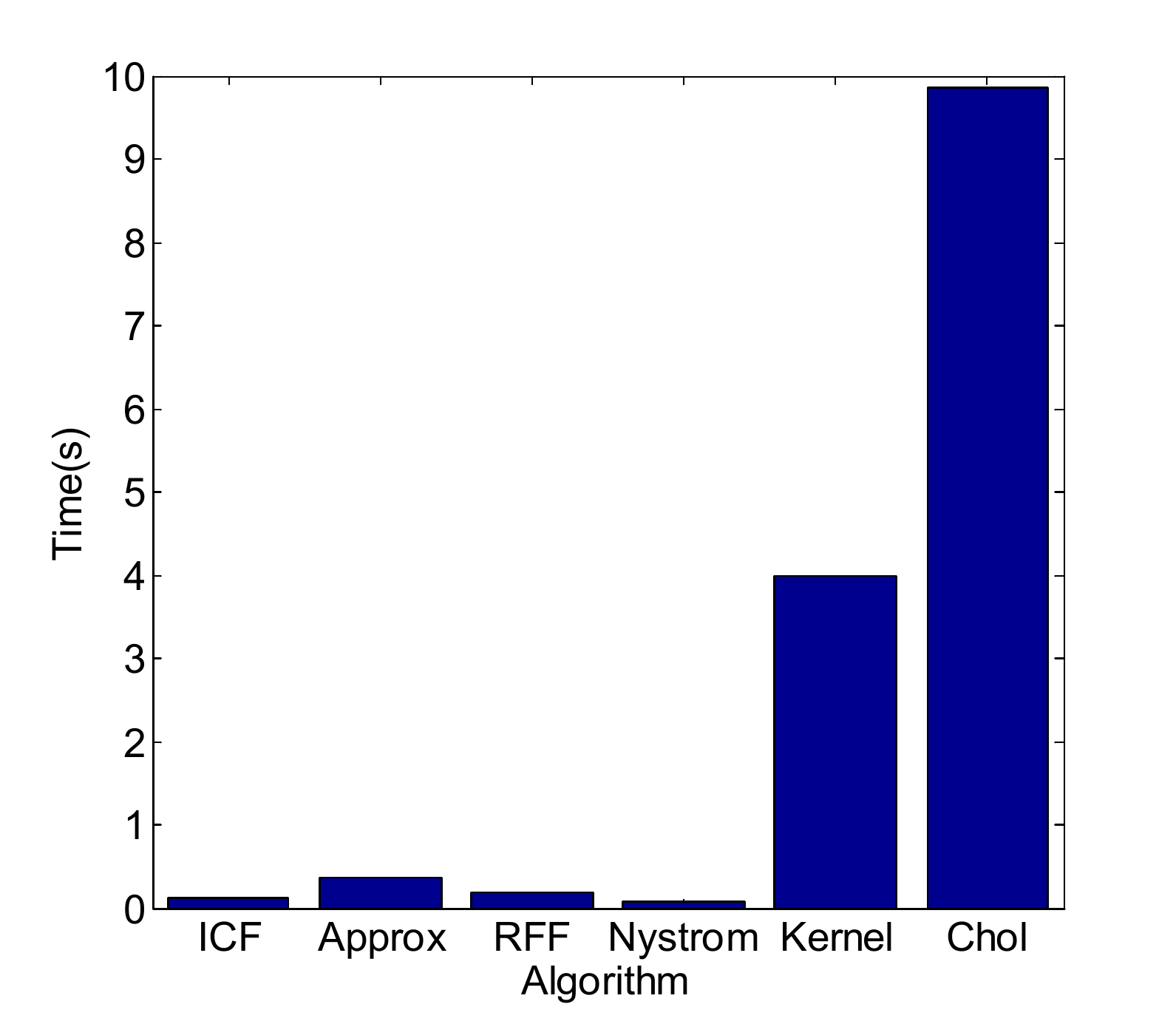}} \\%\hspace{1in}
 \subfigure[Shuttle, $s=50$]{
    \label{shuttle_time}
    \includegraphics[width=0.48\textwidth]{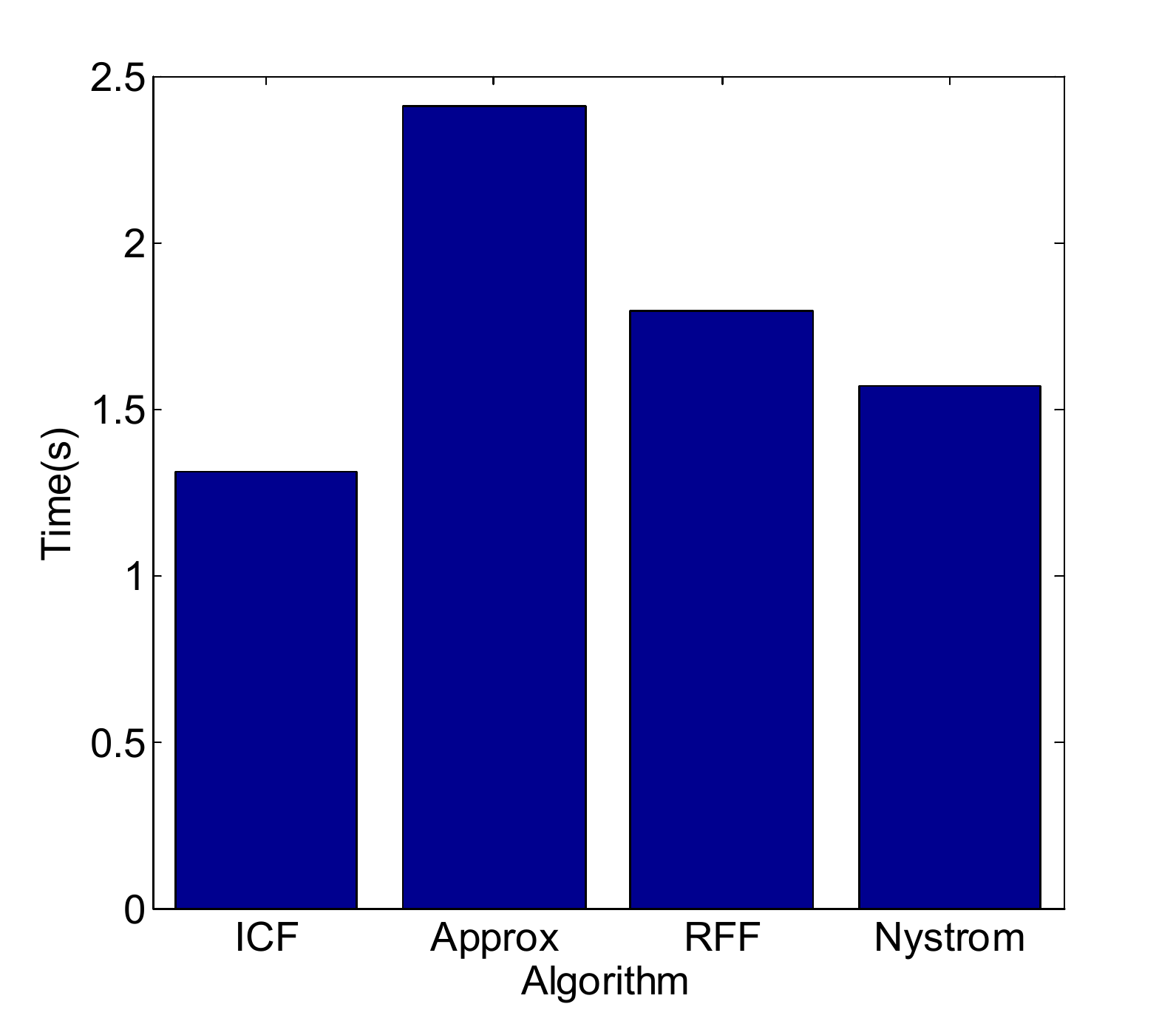}}
 \subfigure[Mnist, $s=50$]{
    \label{fig:mnist_time}
    \includegraphics[width=0.48\textwidth]{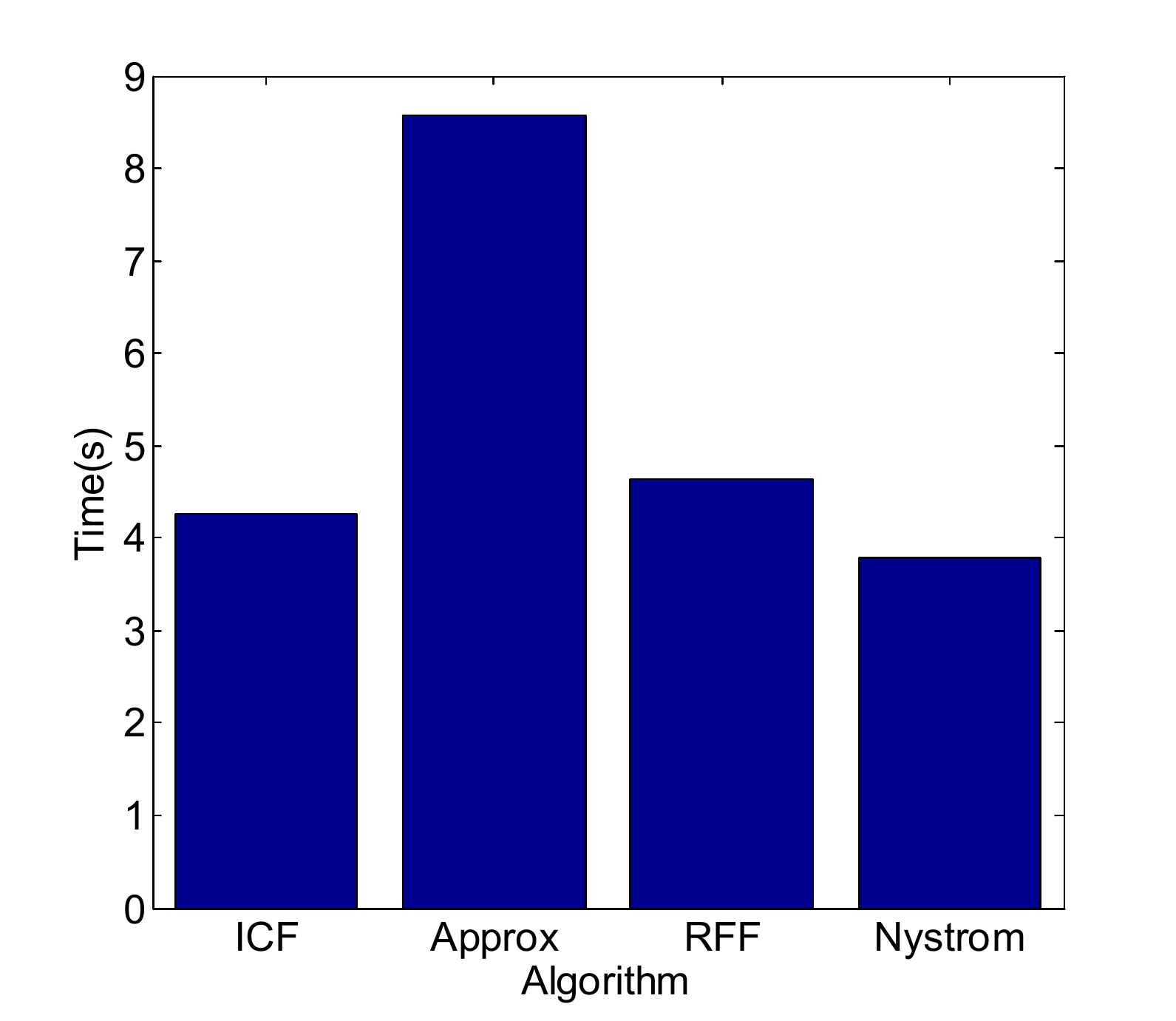}}
 \caption{When subsetsize sets as a small constant, clustering time comparison of algorithms on real-world datasets. $s$ indicates subsetsize. 'Chol' is the Kernel+Chol algorithm. $s=25, 50, 50$ and $50$ are sufficient to obtain satisfactory clustering accuracy for PenDigits, Satimage, Shuttle and Mnist datasets, respectively.}
 \label{fig:Real-world_time_opt}
\end{figure}

We compare all the algorithms mentioned in section \ref{algorithms} on PenDigits and Satimage datasets. However, for other two datasets, the full kernel matrix is infeasible, therefore we only compare the improved kernel $k$-means algorithms, in which full kernel matrix does not need to be computed. Figs. \ref{fig:Real-world_acc}- \ref{fig:Real-world_time_opt} show the clustering accuracy and running time of the algorithms on all datasets.
\begin{itemize}
\item Figs. \ref{fig:PenDigits_acc} and \ref{fig:Satimage_acc} illustrate that the accuracy of kernel $k$-means using complete Cholesky factorization algorithm (Kerenl+Chol) is similar with standard kernel $k$-means, which indicates the equivalence between \eqref{eq:kkmeans} and \eqref{eq:kkmeans2}.
\item The accuracy of the kernel $k$-means using ICF increases with the subsetsize increasing. That is because the larger the subsetsize is, the smaller the approximate error of ICF is.
\item For PenDigits and Satimage datasets, the proposed algorithm has the same accuracy as kernel $k$-means with full kernel matrix, when subsetsizes are larger than $25$ and $50$, respectively. However, Figs. \ref{fig:Pendigits_time_opt} and \ref{fig:Satimage_time_opt} illustrate that the clustering time of the new algorithm is great less than the kernel $k$-means clustering algorithms with full kernel matrix.
%\item The accuracy of the Nystr\"{o}m-based algorithm is similar to that of the ICF-based algorithm. Because if the subset $\mathbb{B}$ sampled by Nystr\"{o}m method is the same as that sampled by ICF, then the approximate matrices of kernel matrix obtained by these two methods are the same. In other words, the input vectors of \eqref{eq:kkmeans2} obtained by these two methods are the same. Therefore, these two methods has similar clustering accuracy. %However, our ICF-based method is faster than Nystr\"{o}m-based method, because the computational complexity of ICF is smaller than that of Nystr\"{o}m.
\item Fig. \ref{fig:Real-world_acc} indicates that the accuracy of ICF-based algorithm is better than the Nystr\"{o}m-based kernel $k$-means algorithm, RFF-based kernel $k$-means algorithm and approximate kernel $k$-means algorithm.
\item The new algorithm can achieve good clustering accuracy when subsetsize is larger than a very small constant. For PenDigits, Satimage, Shuttle and Mnist datasets, the subsetsizes are just set as 25, 50, 50 and 50 to achieve good clustering accuracy. Fig. \ref{fig:Real-world_time_opt} shows that the running time of the four algorithms (ICF, Approx, RFF and Nystrom) is similar, while the clustering accuracy of the new algorithm is better than the other three algorithms.
\item The variances of Nystr\"{o}m-based algorithm, RFF-based algorithm and Approximate kernel $k$-means are greater than the proposed algorithm. That is because these three methods are all based on the idea of randomly sampling.
\item In terms of running time, kernel $k$-means using complete Cholesky factorization algorithm is slower than standard kernel $k$-means due to complete Cholesky factorization consuming much time. However, our ICF-based algorithm greatly reduces running time, and faster than the standard kernel $k$-means. This verifies the effectiveness of our method.
\item Compared with three improved kernel $k$-means algorithms, the running time of the new algorithm increases slightly faster. However, when the subsetsize is small, the running time of these four algorithms is similar, while the clustering accuracy of our algorithm is better than that of the other three algorithms.
\end{itemize}
\section{Conclusion}

We have proposed a fast kernel $k$-means clustering algorithm, which uses incomplete Cholesky factorization and $k$-means clustering to obtain a good approximation of kernel $k$-means clustering. We have analyzed the convergence of ICF algorithm and shown that the ICF is exponentially convergent if the eigenvalues of the kernel matrix exponentially decrease. We also have bounded the approximate error between ICF-based kernel $k$-means algorithm and kernel $k$-means clustering algorithm, and shown that the approximate error decreases exponentially. The experimental results illustrate that the proposed method is able to yield similar clustering accuracy as the kernel $k$-means using entire kernel matrix, while the running time and the storage space are greatly reduced. In the future, we plan to investigate and research the minimum size of sampled subset required to yield similar accuracy as the kernel $k$-means with entire kernel matrix.

\section*{Acknowledgements}
This work is supported by the National Natural Science Foundation of China (NNSFC) [No. 61772020].
\section*{References}

\bibliography{fast_kkmeans}

\end{document}